\pdfoutput=1
\documentclass[12pt,oneside]{article}

\usepackage[margin=1in]{geometry}

\usepackage{blindtext}
\usepackage[section]{placeins}
\usepackage{amssymb,amsmath,amsthm}
\usepackage{cancel}
\usepackage{natbib, url}
\usepackage{graphicx}
\usepackage{algorithm}
\usepackage{algpseudocode}
\usepackage{longtable}
\usepackage{hyperref}
\usepackage{multirow}
\usepackage{subcaption}
\usepackage{chngpage}
\usepackage{float}
\usepackage[svgnames]{xcolor}
\usepackage{listings}

\newtheorem{theorem}{Theorem}[section]
 \newtheorem{lemma}[theorem]{Lemma}

\newcommand{\R}{\mathbb{R}}

\newcommand{\bX}{{\boldsymbol X}}
\newcommand{\bY}{{\boldsymbol Y}}

\newcommand{\bH}{{\boldsymbol H}}
\newcommand{\bA}{{\boldsymbol A}}
\newcommand{\bB}{{\boldsymbol B}}
\newcommand{\bC}{{\boldsymbol C}}

\newcommand{\bE}{{\boldsymbol E}}

\newcommand{\bI}{{\boldsymbol I}}
\newcommand{\bPhi}{{\boldsymbol \Phi}}
\newcommand{\bDelta}{{\boldsymbol \Delta}}
\newcommand{\bzero}{{\boldsymbol 0}}
 \DeclareMathAlphabet{\pazocal}{OMS}{zplm}{m}{n}

\usepackage{lastpage}

\title{A Bayesian Vertical Federated Learning Framework for Multivariate Reduced-Rank High-Dimensional Regression}
\author{Brigham Halverson \\
  Department of Statistics \\
  Texas A\&M University \\
  \texttt{birghalvy@tamu.edu}
  \and
  Sharmistha Guha  \\
       Department of Statistics\\
       Texas A\&M University\\
       \texttt{sharmistha@tamu.edu}
       \and
       Jessica Bernard \\
       Department of Psychological \& Brain Sciences\\
       Texas A\&M University\\
       \texttt{jessica.bernard@tamu.edu}
        \and
       Rajarshi Guhaniyogi \\
       Department of Statistics\\
       Texas A\&M University\\
       \texttt{rajguhaniyogi@tamu.edu}}
\date{}

\begin{document}

\maketitle

\begin{abstract}
Federated learning (FL) has emerged as a leading privacy-preserving framework for collaborative machine learning across decentralized environments. While considerable progress has been made in horizontal federated learning (HFL), where data with common features is distributed across sites, vertical federated learning (VFL), where sites share observations across distinct feature sets, remains less explored. Advancing Bayesian high-dimensional multivariate reduced-rank regression methods for VFL poses unique challenges: (a) stringent privacy regulations preventing local site data from being shared, and (b) fitting local regressions overlooks essential modeling aspects such as inter-variable correlations. In contrast HFL allows each site to fit a comparable model independently. In this article, we present \emph{a novel Bayesian VFL framework for multivariate high-dimensional reduced-rank regression}, coined as \emph{BayesVFLReg}, which enables precise coefficient estimation while safeguarding both features and responses privacy.
Participating sites employ a shared random sketching matrix to compress local variables into privacy-preserving sketches. A central server collects these sketches where Bayesian multivariate reduced-rank regression is done using Gaussian scale mixture priors. For feature selection, we introduce a single-step post-processing strategy based on mixture-model clustering of the absolute posterior coefficient means to distinguish signal from noise per response variable. \emph{BayesVFLReg} is computationally scalable for large, high-dimensional datasets and facilitates efficient variable selection. Theoretically, we establish sharp non-asymptotic bounds on the posterior probability that the fitted density falls within a Hellinger ball centered at the true data-generating density. Comparative simulation studies and real-world data analyses show that \emph{BayesVFLReg} reliably identifies sparse feature effects, even under feature correlation.
 \end{abstract}

\noindent\textbf{Keywords:} Bayesian modeling; High-dimensional features; multivariate regression; vertical federated learning; variable selection.

\section{Introduction}


Federated learning (FL) has sparked a paradigm shift in privacy-preserving and communication-efficient analytics, empowering collaborative model training across multiple institutions without direct data sharing. This innovation allows organizations to leverage distributed data resources while maintaining strict privacy standards, making FL a cornerstone in sensitive domains such as healthcare, finance, and multi-center research collaborations. FL is generally categorized into horizontal federated learning (HFL) and vertical federated learning (VFL). In HFL, participating sites share identical sets of features but differ in their observations, facilitating conventional statistical aggregation strategies. In contrast, VFL deals with sites that possess distinct feature sets for aligned subject observations. Within VFL, there exist multiple sub-classifications; following \citep{wu2025vertical}. In this article, we focus on the \emph{Precise VFL} setting, in which all observations are perfectly aligned across sites.

We consider inference on the parameters of a high-dimensional multivariate regression model, which can be described as:
\begin{equation}
   \bY_i^\top = \bX_i^\top\bC + \boldsymbol{e}_i^\top, \quad i \in \{1, \dots, n\},
   \label{eq:basic_reg}
\end{equation}
where $\bX_i$ is a $P \times 1$ vector of features for the $i$th subject, and $\bY_i$ is a $Q \times 1$ vector representing the response variables for the $i$th subject. The regression coefficient matrix $\bC$ is of size $P \times Q$ and contains the parameters of interest. The noise vector $\boldsymbol{e}_i$ accounts for residual variation, and is assumed to follow an independent $Q$-variate normal distribution with mean zero and a diagonal covariance structure, i.e., $\boldsymbol{e}_i \mid \{\sigma_1^2, ..., \sigma_Q^2\} \overset{iid}{\sim} N_Q(\boldsymbol{0}, \text{diag}(\sigma_1^2, ..., \sigma_Q^2))$.

By aggregating over all $n$ subjects, the model can be rewritten in matrix form as:
\begin{equation}
   \bY = \bX\bC + \boldsymbol{E},
   \label{eq:basic_reg_matrix}
\end{equation}
where $\bY$ is a $n \times Q$ matrix with the $i$th row given by $\bY_i^\top$, and $\bX$ is a $n \times P$ matrix of features with the $i$th row corresponding to $\bX_i^\top$. Similarly, $\boldsymbol{E}$ is the $n \times Q$ residual matrix with its $i$th row denoted by $\boldsymbol{e}_i^\top$.

A considerable amount of research has addressed the estimation of such regression models in the context of \emph{horizontal federated learning} (HFL) \citep{yang2019federated}. In HFL, data are distributed horizontally across multiple sites, meaning each site has a different subset of independent subjects but each subject is observed across the same set of $P$ features and $Q$ responses. This implies that for the model specified above, observations across sites remain independent. Consequently, the model can be fitted locally at each site using its site-specific data block, leading to local estimates of the parameters $\{\bC, \sigma_1^2, \ldots, \sigma_Q^2\}$. These local estimates, though potentially noisy due to limited sample sizes per site, can be subsequently transmitted to a central server, where they are aggregated to obtain final consensus estimates of the regression parameters. This federated approach preserves both data privacy (by not sharing raw data across sites) and allows for scalable analytical procedures across decentralized and heterogeneous datasets.

In contrast, vertical federated learning (VFL) is still a relatively underexplored field, especially when compared to horizontal federated learning. This is largely due to the intrinsic challenges of VFL: since features are distributed across different sites, no individual site can access the entire set of features for any given data sample.
This decentralized structure complicates the process of fitting a single, unified model, as information about each sample is distributed in a fragmented manner. To date, the majority of research on vertical federated learning (VFL) has emerged primarily within the machine learning community, with a predominant focus on developing algorithmic solutions for model training and prediction subject to privacy constraints \citep{liu2024vertical}. In the statistical literature, investigations under the VFL framework have been largely limited to simple linear and logistic regression models, with primary attention given to point estimation and hypothesis testing from a frequentist standpoint.
Notable studies have proposed methods for estimating model parameters and conducting statistical inference in these settings, often leveraging techniques such as secure computation, cryptographic protocols, and homomorphic encryption to safeguard data privacy \citep{gratton2018distributed, gascon2016secure, sanil2004privacy, chen2021homomorphic, zhao2023vflr}. Several studies have explored the application of high-dimensional ridge regression models in the context of VFL. 
\cite{yang2019federated} introduced a VFL algorithm tailored for ridge regression; however, its application is restricted to settings involving only two sites, and its reliance on gradient-based optimization methods may present efficiency challenges.
Subsequently, \cite{liu2021federated} incorporated Yang’s VFL algorithm into a FL framework designed for smart grids. Recent studies \citep{dai2022edge, cai2022efficient} have emphasized that large-scale datasets and the computational overhead of homomorphic encryption can considerably slow down the overall process for VFL participants. In response, these works propose VFL algorithms for ridge regression that require only a single communication round to achieve the optimal solution, thereby eliminating the need for iterative procedures. Although these approaches address point estimation, there is a notable lack of methods to \emph{draw inference with uncertainty quantification} for regression parameters within VFL. In fact, approximating posterior distributions for regression parameters in the VFL setting remains highly challenging due to the induced dependencies across feature effects located at different sites. The joint likelihood function does not readily decompose across sites as in HFL, thus hindering the use of standard distributed Bayesian computation strategies and making scalable posterior inference difficult \citep{hassan2024scalable}. 

\subsection{Our Proposal and Contributions}

To address the limitations inherent in joint parametric inference and privacy in vertical federated learning, we propose utilizing \textit{data sketching} \citep{ahfock2017statistical, dobriban2018new, guhaniyogi2025bayesian, guhaniyogi2025sketching} as a foundational aspect of our framework. In this approach, each site multiplies its feature matrices by a random compression matrix, which is predetermined and shared among all participating sites before analysis begins. This process yields a low-dimensional, sketched version of the original data at each site. Once the sketched features are computed locally, only these sketched representations, rather than the raw features, are sent to a central server. At the central server, a multivariate high-dimensional regression model, as described in equation (\ref{eq:basic_reg_matrix}), is then fitted using the pooled sketched data from all sites. 
Our privacy guarantee in VFL is fundamentally supported by an information-theoretic result: the random sketching transformation ensures that, as the sample size increases, the mutual information between the original features and the sketched data per feature per sample converges to zero \citep{zhou2009compressed}.
This implies that, even with a moderately large sample size, the original feature data at any site cannot be reconstructed or reliably inferred from the shared sketched features, which is a key aspect of our approach. Consequently, this property provides a robust foundation for information-theoretic privacy preservation during the collaborative learning process.

Our methodology employs a reduced-rank formulation for the coefficient matrix $\bC$, which enables the joint modeling of multiple correlated response variables in high-dimensional contexts while maintaining computational efficiency. To achieve adaptive shrinkage and sparsity, we impose a global-local scale mixture of Gaussian priors on the elements of $\bC$. 
This prior naturally shrinks the coefficients of irrelevant features toward zero and promotes sparsity across both the rows and columns of the coefficient matrix, thereby improving interpretability and predictive performance.
 Importantly, our approach is fully Bayesian, allowing for posterior inference not only on the regression coefficients $\bC$, but also on the error variances $\sigma_1^2, \ldots, \sigma_Q^2$. This capability provides \emph{coherent uncertainty quantification} for all model parameters. 
 We refer to the proposed approach as \emph{BayesVFLReg}, a new approach developed to fill the current gap in \emph{Bayesian vertical federated learning} methods for high-dimensional multivariate regression.
Furthermore, unlike existing VFL methods for linear regression, which often assume a single ``active site" that houses all response variables for the model, our framework supports a more flexible arrangement wherein response variables can be distributed across multiple active sites, thereby enabling broader and more versatile federated analyses in real-world multi-institutional settings. 

A central, defining strength of our proposed VFL approach lies in its rigorous theoretical foundation. Under justifiable assumptions, we prove that the posterior probability of the fitted density concentrates around the Hellinger ball of the true density. This occurs at a rate that closely mirrors the minimax optimal rate, seamlessly substituting traditional sample size with the dimensions of the data sketches. Our framework provides definitive, mathematical insights into how key model dynamics, specifically the number of predictor coefficients, sample size, and true regression model sparsity, directly dictate the sketching dimensions required to achieve optimal posterior concentration. The framework also identifies the exact, sufficient conditions required for both the sketching matrices and the true regression coefficients to consistently ensure desirable and highly reliable performance. Crucially, the literature on theoretically guaranteed uncertainty for Bayesian VFL is notably scarce, making the rigorous proofs developed in this work an exceptionally vital and timely contribution to the field.

Some clarifications are warranted. First, unlike differential privacy \citep{dwork2014algorithmic, guha2025differentially}, which achieves probabilistic privacy guarantees by injecting noise to obscure the influence of individual records, information theoretic privacy provides unconditional protection, ensuring that data sketches disclose minimal information about individuals, regardless of the capabilities of an adversary.
Second, this work advances the literature on data sketching. Earlier research has investigated sketching methods primarily in the context of simple linear regression, emphasizing accurate coefficient estimation, and has extended these approaches to ridge and lasso regression models, frequently analyzing their statistical properties through leverage scores \citep{sarlos2006improved, dobriban2018new}.
However, research on leveraging data sketching in high-dimensional Bayesian regression with large sample sizes, where computational costs rapidly escalate and data sketching could provide an effective solution, remains scarce. Exceptions are some recent studies that focus exclusively on univariate high-dimensional regression \citep{geppert2017random, guhaniyogi2025bayesian}.
To the best of our knowledge, our approach is the first to develop an efficient Bayesian computational algorithm for large-scale multivariate reduced-rank regression utilizing data sketching, enabling accurate and uncertainty-aware inference for regression parameters in this context.

The remainder of this paper is organized as follows. In Section~\ref{sec:VFL_MVR}, we introduce the proposed vertical federated learning framework for high-dimensional multivariate regression under a principled Bayesian approach. Section~\ref{sec:Simulation} demonstrates the effectiveness of data sketching through empirical studies, highlighting how our framework ensures accurate inference of regression parameters under vertical federated setting. In Section~\ref{sec:data_app}, we apply our method to real data to illustrate its practical utility. Section~\ref{sec:VFLreg_theory} develops the theoretical framework including assumptions and the main posterior concentration results. Finally, Section~\ref{sec:conclusion} concludes the paper with a discussion of our findings and potential avenues for future research. Appendix offers proofs of the theoretical results presented in Section~\ref{sec:VFLreg_theory}.

\section{Vertical Federated Learning for Multivariate Regression}
\label{sec:VFL_MVR}

Let $\mathcal{N} = \{1, \ldots, n\}$ represent the set of $n$ sample identifiers. Suppose there are $K$ distinct participating sites, each of which securely holds unique subsets of response and feature data corresponding to all sample IDs in $\mathcal{N}$. 
In this context, suppose there are $Q$ response variables and $P$ feature variables in total. The $k$th site holds $Q_k$ response variables and $P_k$ feature variables, for $k = 1, \ldots, K$, where $\sum_{k=1}^{K} Q_k = Q$ and $\sum_{k=1}^{K} P_k = P$. 

We allow flexibility in the storage at each site: a site may contain only responses ($P_k = 0$), only features ($Q_k = 0$), or both, provided that at least one variable type is present at the site, i.e., $\max(P_k, Q_k) > 0$. This design permits realistic configurations, such as sites specializing in either response or feature data.
Let $\bY^{(k)}\in \mathbb{R}^{n\times Q_k}$ denote the matrix of the $Q_k$ response variables for all sample IDs at the $k$th site, and let $\bX^{(k)}\in \mathbb{R}^{n\times P_k}$ denote the matrix of $P_k$ feature variables for all sample IDs at the $k$th site. Importantly, the raw data at each site is \emph{privacy-protected}. This implies that the data must remain local and cannot be exchanged with other sites or transmitted to a central server.

In the context of multivariate reduced-rank regression, our objective is to incorporate all response and feature variables across sites to estimate regression coefficients and identify key features corresponding to each response. However, independent analysis at each site is not feasible due to the inherent limitations: many sites may exclusively contain either response or feature data, but not both. Furthermore, performing regression analysis using only a subset of features available at a given site neglects the inter-feature correlations and can substantially bias the regression coefficients. This situation exemplifies the challenges of vertical federated learning (VFL) paradigm, which is notably more challenging than horizontal federated learning. In horizontal federated learning, each site possesses data for a distinct subset of samples, but includes all features and responses for those samples. As a result, the full model can be distributed across sites, and local inferences made at each site are relatively straightforward, typically offering noisy approximations to the global posterior distribution of regression parameters. Existing approaches leverage techniques such as secure computation, cryptographic protocols, and homomorphic encryption to protect data privacy \citep{gratton2018distributed, gascon2016secure, sanil2004privacy, chen2021homomorphic, zhao2023vflr} in VFL, but do not account for parametric uncertainty in high-dimensional linear regression. In what follows, we develop a two-stage VFL framework to address these challenges, with the overall procedure depicted in Figure~\ref{fig:model_diagram}.
\begin{figure}[ht]
    \centering
    \includegraphics[width=.75\linewidth]{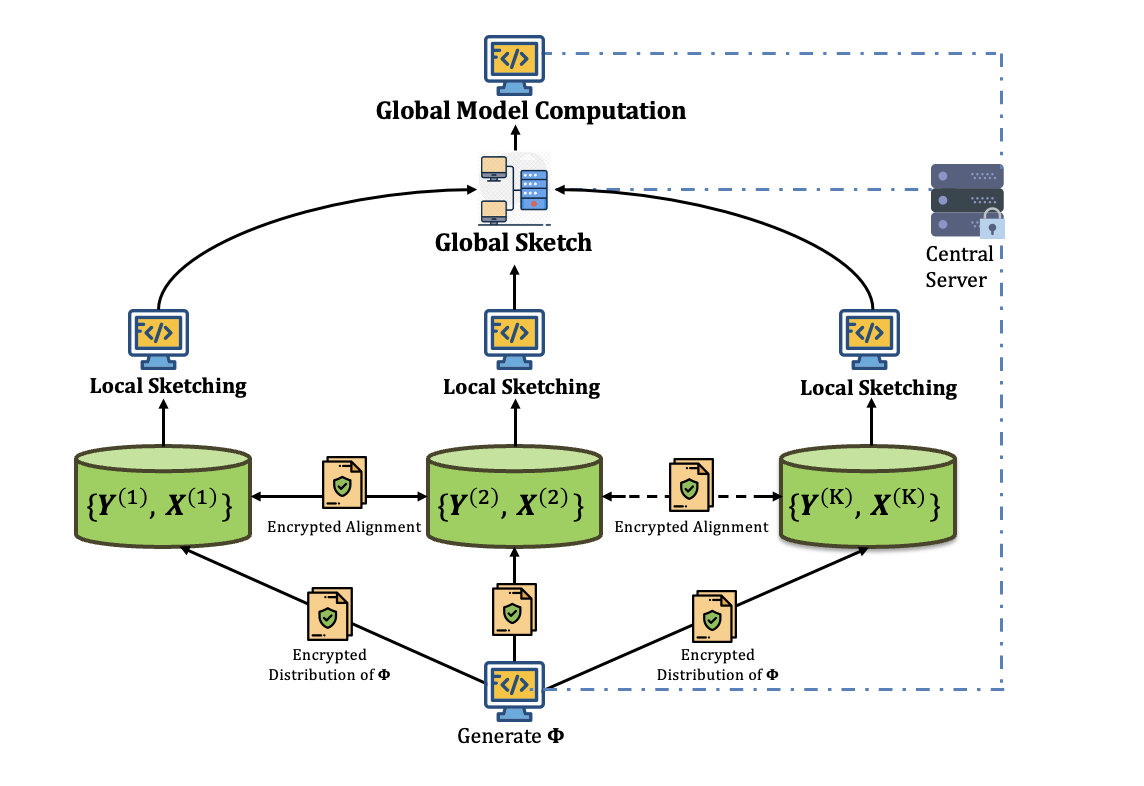}
    \caption{VFL for Multivariate Regression using Federated Sketch Construction}
    \label{fig:model_diagram}
\end{figure}
\subsection{Step 1: Federated Sketch Construction with Random Matrices in Each Site}
We propose constructing a random matrix $\boldsymbol{\Phi}$ of dimension $m \times n$, with each entry drawn i.i.d. from $N(0, 1/n)$ and $m << n$. This random matrix is shared across all $K$ sites. Each site $k$ performs a \emph{local and private computation}, creating a small, $m$-dimensional sketch of its own data:
\begin{itemize}
    \item Local Sketched Responses: $\bY_{\boldsymbol{\Phi}}^{(k)} = \boldsymbol{\Phi} \bY^{(k)}$
    \item Local Sketched Features: $\bX_{\boldsymbol{\Phi}}^{(k)} = \boldsymbol{\Phi} \bX^{(k)}$
\end{itemize}
These \emph{local sketches} are then securely transmitted to a central server. The \emph{global sketch} is reconstructed via a simple, communication-efficient summation:
\begin{itemize}
 \item Global Sketched Responses: $\bY_{\boldsymbol{\Phi}} = [\bY_{\boldsymbol{\Phi}}^{(1)}:\cdots:\bY_{\boldsymbol{\Phi}}^{(K)}]\in \mathbb{R}^{m\times Q}$
    \item Global Sketched Features: $\bX_{\boldsymbol{\Phi}} = [\bX_{\boldsymbol{\Phi}}^{(1)}:\cdots: \bX_{\boldsymbol{\Phi}}^{(K)}]\in \mathbb{R}^{m\times P}$  
\end{itemize}
A fundamental strength of our federated sketching framework lies in its inherent privacy protection. The random projection mechanism at its core ensures that the original data remain secure, providing rigorous information-theoretic privacy guarantees. Specifically, when a central server receives a local sketch, such as $\bX_{\boldsymbol{\Phi}}^{(k)} = \boldsymbol{\Phi} \bX^{(k)}$ and $\bY_{\boldsymbol{\Phi}}^{(k)} = \boldsymbol{\Phi} \bY^{(k)}$, it encounters a highly underdetermined linear system. Because the sketch dimension $m$ is much smaller than the original data dimension $n$, it is not possible to uniquely reconstruct $\bX^{(k)}$ or $\bY^{(k)}$ from their respective sketches.

This privacy property is mathematically formalized via information theory. The mutual information between the sketch and the original data, denoted $I(\bX_{\boldsymbol{\Phi}}^{(k)}, \bX^{(k)})$, quantifies the amount of information the sketch reveals. It can be established that the supremum of the normalized mutual information, $\frac{I(\bX_{\boldsymbol{\Phi}}^{(k)}, \bX^{(k)})}{nP_k}$, is bounded above by $O\left(\frac{m}{n}\right)$ \citep{zhou2008compressed}. An analogous result holds for the sketch of the responses: $\frac{I(\bY_{\boldsymbol{\Phi}}^{(k)}, \bY^{(k)})}{nQ_k}$ is also bounded above by $O\left(\frac{m}{n}\right)$. Notably, this guarantee persists even in the adversarial scenario where the sketching matrix $\boldsymbol{\Phi}$ is fully known to the server. 
As the dataset size  $n$ grows, these upper bounds approach zero, demonstrating that in the limit as $n \rightarrow \infty$, the information contained in the sketch about the original dataset becomes negligible. In effect, the sketched data becomes as anonymous as a random sample independent of the original data.

In real-world federated settings, the privacy advantages are often even more pronounced. Conventionally, only the global sketch (such as $\bX_{\boldsymbol{\Phi}}^{(k)}$) is shared, and the random sketching matrix $\boldsymbol{\Phi}$ may remain confidential. This added layer of protection dramatically increases the difficulty of reconstructing the original data, surpassing even the theoretical guarantees \citep{guhaniyogi2025bayesian, guhaniyogi2025sketching}. It is important to note that this form of privacy, rooted in the principles of information theory and computational hardness, is fundamentally distinct from statistical privacy guarantees such as those provided by $\epsilon$-differential privacy \citep{dwork2014algorithmic}. Whereas $\epsilon$-differential privacy constrains the probability of distinguishing individual samples, our framework ensures that the sketches intrinsically reveal minimal information about the original data, irrespective of computational resources or adversarial knowledge.

\subsection{Step 2: Fitting Reduced-Rank Multivariate Regression in the Central Server}
At the central server, we fit the multivariate regression model with the global sketches, given by,
\begin{equation}
    \bY_{{\boldsymbol \Phi}} = \bX_{{\boldsymbol \Phi}}\bC + \boldsymbol{E},
    \label{eq:mds_model}
\end{equation}
where $\boldsymbol{E} = [\boldsymbol{e}_1^\top: \hdots: \boldsymbol{e}_m^\top]^\top$ denotes the $m\times Q$ dimensional matrix of errors, with $\boldsymbol{e}_1,...,\boldsymbol{e}_m\stackrel{i.i.d.}{\sim} N_Q({\boldsymbol 0}, diag(\sigma_1^2,...,\sigma_Q^2))$.

Model~\ref{eq:mds_model} closely resembles a ${\boldsymbol \Phi}$-transformed version of the original statistical model, but with a fundamental distinction. Specifically, we do \emph{not} apply the ${\boldsymbol \Phi}$ transformation to the error matrix. This design choice is rooted in insights from random matrix theory. In particular, it has been shown that the deviation $\left\| \boldsymbol{\Phi} \boldsymbol{\Phi}^\top - \bI \right\|$ can be bounded by $\tilde{C} \sqrt{\frac{m}{n}}$ with high probability, where $\tilde{C}$ is a constant and $m \ll n$ \citep{vershynin2010introduction}. Consequently, the ${\boldsymbol \Phi}$-transformed errors exhibit behavior that is approximately orthogonal, preserving essential statistical properties such as independence and Gaussianity in the sketches. By leaving the error matrix untransformed, we avoid sharing the $\boldsymbol{\Phi}$ matrix with the central server, as well as gaining in computation when the sample size $n$ is large.

We assume that the matrix $\bC$ possesses a low-rank structure, specifically $\text{rank}(\bC) = R < \min(P, Q)$. This assumption enables us to express $\bC$ as the product of two matrices: $\bC = \boldsymbol{B}\boldsymbol{A}^\top$, with $\boldsymbol{B} \in \mathbb{R}^{P \times R}$ and $\boldsymbol{A} \in \mathbb{R}^{Q \times R}$. This factorization is referred to as the reduced-rank representation and is commonly employed in multivariate statistical modeling \citep{reinsel1998multivariate, chen2012sparse} to capture latent structure and to prevent overfitting in high-dimensional regression settings.

Driven by practical applications in multivariate regression, especially those involving a large number of features and a comparatively moderate number of responses, we focus on the frequent scenario where $Q < P$. Although it is possible to place priors directly on the rank $R$ to facilitate adaptive inference on the latent space, we instead pursue a more flexible approach: we set $R = Q$, such that both $\boldsymbol{A}$ and $\boldsymbol{B}$ are of full column rank. To mitigate overfitting and limit the impact of irrelevant columns, we place continuous shrinkage priors on the columns of $\boldsymbol{B}$. 
This approach is analogous to widely used Bayesian multi-way regression methods, where explicit rank selection in low-rank matrix or tensor decompositions is frequently substituted with the choice of a relatively large rank combined with shrinkage estimation 
\citep{guha2021bayesian, guhaniyogi2021bayesian}. 

More specifically, we adopt a global-local continuous shrinkage prior framework (e.g., \cite{carvalho2010horseshoe, caron2008sparse}), realized as global-local scale mixtures of Gaussians \citep{polson2010shrink}. Under this arrangement, the $(p,q)$th entry $b_{p,q}$ of the matrix $\boldsymbol{B}$ is modeled as follows:
\begin{equation}
    b_{p,q} \mid \lambda_{p,q}, \tau_q \sim \mathcal{N}\left(0, \lambda_{p,q}^2\tau_q^2\right), \qquad \lambda_{p,q} \sim f_1, \qquad \tau_q \sim f_2,
\end{equation}
where $\tau_q$ is a global shrinkage parameter for the $q$th column, and $\lambda_{p,q}$ is a local shrinkage parameter for the $p$th entry of the $q$th column. The densities $f_1$ and $f_2$ are supported on $\mathbb{R}$, providing flexibility in prior specification. This global-local shrinkage setup parallels approaches commonly used in high-dimensional sparse regression, with the shrinkage inducing effective sparsity and controlling model complexity.

Different choices for $f_1$ and $f_2$ result in distinct forms of shrinkage priors. While our framework can accommodate any global-local normal mixture prior, for demonstration and empirical evaluation, we focus on the horseshoe prior \citep{carvalho2010horseshoe}, which is obtained by choosing both $f_1$ and $f_2$ as independent Half-Cauchy distributions. The horseshoe prior is well-known for its ability to strongly shrink irrelevant coefficients while retaining important signals. Each entry of $\boldsymbol{A}$ is assigned a N(0,1) prior distribution. Furthermore, for the variance parameters $\sigma^2_q$, we assign independent Jeffreys priors, specified as $\pi(\sigma^2_1) \propto 1/\sigma^2_q$. This noninformative prior supports robust estimation of variance components within our model.

\subsubsection{Posterior Approximation}
With prior distributions on $\boldsymbol{B}$ and $\boldsymbol{A}$ set as above, the posterior computation using a blocked Metropolis-within-Gibbs algorithm cycles through updating the full conditional distributions as expressed in Algorithm~\ref{alg:gibbs_sampling}.

\begin{algorithm}
\caption{Metropolis-within-Gibbs Algorithm}
\label{alg:gibbs_sampling}
\begin{algorithmic}
    \State Initialize $\boldsymbol{B}^{(0)}, \boldsymbol{A}^{(0)}, \{\sigma_q^{2,(0)}\}_{q = 1}^Q,\{\lambda_{p,q}^{(0)}\}_{p, q = 1}^{P,Q}, \{\tau_q^{(0)}\}_{q = 1}^Q$
    \For{$t = 1$ to $T_{samps}$}
        \State \textbf{Step 1: }Draw $\boldsymbol{B}^{(t)}$ from $\boldsymbol{B} \mid \bX_{{\boldsymbol \Phi}}, \bY_{{\boldsymbol \Phi}}, \boldsymbol{A}^{(t-1)}, \{\sigma_q^{2, (t - 1)}\}_{q = 1}^Q,\{\lambda_{p,q}^{(t-1)}\}_{p, q = 1}^{P,Q}, \{\tau_q^{(t-1)}\}_{q = 1}^Q$
    
        \State \textbf{Step 2: }Draw $\boldsymbol{A}^{(t)}$ from $\boldsymbol{A} \mid \bX_{{\boldsymbol \Phi}}, \bY_{{\boldsymbol \Phi}}, \boldsymbol{B}^{(t)}, \{\sigma_q^{2, (t-1)}\}_{q = 1}^Q$
    
        \State \textbf{Step 3: }Draw $\{\sigma_q^{2, (t)}\}_{q = 1}^Q$ from $\sigma_q^2 \mid \bX_{{\boldsymbol \Phi}}, \bY_{{\boldsymbol \Phi}}, \boldsymbol{A}^{(t)}, \boldsymbol{B}{(t)}$
    
        \State \textbf{Step 4: }Draw $\{\lambda_{p,q}^{(t)}\}_{p, q = 1}^{P,Q}$ from $\lambda_{p,q} \mid \tau_q^{(t-1)}, \boldsymbol{B}^{(t)}$ and $\{\tau_q^{(t)}\}_{q = 1}^Q$ from $\tau_q \mid \{\lambda_{p,q}^{(t)}\}_{p = 1}^P, \boldsymbol{B}^{(t)}$
    \EndFor
\end{algorithmic}
\end{algorithm}
Explicit expressions for steps 1-4 for the horseshoe shrinkage prior are available in the supplementary material. Although updating 2-4 is computationally straightforward, updating 1 faces computational challenges. To see this, let ${\boldsymbol \beta}=vec(\boldsymbol{B})^\top\in\mathbb{R}^{PQ\times 1}, \boldsymbol{\Lambda} = \text{diag}(\lambda_{11}^2\tau_1^2, ..., \lambda_{1Q}^2\tau_Q^2, ..., \lambda_{PQ}^2\tau_Q^2) \in \mathbb{R}_+^{PQ \times PQ}$, ${\boldsymbol y}_{\boldsymbol{\Phi}}=\text{vec}({\boldsymbol Y}_{{\boldsymbol \Phi}}^\top)\in\mathbb{R}^{mQ\times 1}$, $\boldsymbol{\Sigma} = \text{diag}(\sigma_1^2, ..., \sigma_q^2) \in \mathbb{R}_+^{Q \times Q}$ and $\tilde{\boldsymbol{\Sigma}} = \text{diag}(\boldsymbol{\Sigma}, ..., \boldsymbol{\Sigma}) \in \mathbb{R}_+^{mQ \times mQ}$. The full conditional distribution of ${\boldsymbol \beta}$ is given by,
\begin{equation}
    \boldsymbol{\beta} \mid \bX_{{\boldsymbol \Phi}}, \bY_{{\boldsymbol \Phi}}, \boldsymbol{A}, \{\sigma_q^2\}_{q = 1}^Q,\{\lambda_{p,q}\}_{p, q = 1}^{P,Q}, \{\tau_q\}_{q = 1}^Q \sim N_{PQ}(\boldsymbol{\Omega}_B^{-1}\tilde{\bX}^\top\tilde{\bY}, \boldsymbol{\Omega}_B^{-1})
\end{equation}
where $\tilde{\bY}_{\boldsymbol{\Phi}} = \tilde{\boldsymbol{\Sigma}}^{-1/2}\bY_{\boldsymbol{\Phi}}$, $\tilde{\bX}_{\boldsymbol{\Phi}} = \tilde{\boldsymbol{\Sigma}}^{-1/2}(\bX_{\boldsymbol{\Phi}} \otimes \boldsymbol{A})$, $\boldsymbol{\Omega}_B = (\tilde{\bX}_{\boldsymbol{\Phi}}^\top\tilde{\bX}_{\boldsymbol{\Phi}} + \boldsymbol{\Lambda}^{-1})$.

Given that in most applications $P >> m$, performing the first step can be computationally intensive, with complexity $O((PQ)^3)$. To address this, we take an approach adapted from \cite{guha2020bayesian}, and implement the following algorithm: 
\begin{algorithm}
\caption{Sampling $\boldsymbol{\beta}$}
\label{alg:beta_sample}
\begin{algorithmic}
    \State \textbf{Step 1: }Sample $\boldsymbol{u} \sim N(\boldsymbol{0}, \boldsymbol{\Lambda})$, ${\boldsymbol \delta} \sim N(0, I_{mQ})$
    
    \State \textbf{Step 2: }Set $\boldsymbol{v} = \tilde{\bX}_{\boldsymbol{\Phi}}\boldsymbol{u} + {\boldsymbol \delta}$
    
    \State \textbf{Step 3: }Solve $(\tilde{\bX}_{\boldsymbol{\Phi}}\boldsymbol{\Lambda}\tilde{\bX}_{\boldsymbol{\Phi}}^\top + I_{mq})\boldsymbol{w} = (\tilde{\bY}_{\boldsymbol{\Phi}} - \boldsymbol{v})$ to obtain $\boldsymbol{w}$
    
    \State \textbf{Step 4: }Set $\boldsymbol{\beta} = \boldsymbol{u} + \boldsymbol{\Lambda}\tilde{\bX}_{\boldsymbol{\Phi}}^\top\boldsymbol{w}$
\end{algorithmic}
\end{algorithm}

This reduces the complexity from $O(P^3Q^3)$ to $O(Q^3\text{min}(m^3, P))$, resulting in linear scaling with the number of features $P$ and cubic scaling with respect to $m$. Since $m \ll n$, this demonstrates that the method is highly scalable even when both $n$ and $P$ are large. 

\subsubsection{Post-Processing and Variable Selection}
\label{sec:post_process}

It is important to note that, although the shrinkage prior pulls unimportant coefficients toward zero, it does not set them exactly to zero. To identify the most relevant variables, we apply a post-processing procedure based on the method proposed in \cite{guha2021bayesian}, as described below. This approach uses Gaussian clustering to divide the predictors into two groups: those associated with zero and those with non-zero coefficients. The procedure is as follows:
\begin{algorithm}
\caption{Feature Classification Algorithm}
\label{alg:feature_classification}
\begin{algorithmic}
    \State \textbf{Step 1: }Obtain a posterior or mean estimate of $\bC$, call this $\widehat{\bC}$
    
    \State \textbf{Step 2: }Let $\tilde{\bC}$ be the absolute value applied to each element of $\widehat{\bC}$
    
    \State \textbf{Step 3: }Cluster the rows of $\tilde{\bC}$ into two groups using a two-component mixture of Gaussian distributions 
    
    \State \textbf{Step 4: }Use the generated probability ($p_j$, $j = 1, \dots, p$) that a given feature is allocated to the mixture component with the lowest mean
    
    \State \textbf{Step 5: }For $\tilde{\alpha} = 0.05$, use the above probabilities to classify if the given feature is non-zero, i.e., $p_j < \tilde{\alpha} \Rightarrow$ feature $j$ is classified as non-zero
\end{algorithmic}
\end{algorithm}

The parameter $\tilde{\alpha}$ serves as a model selection threshold. In this work, we set $\tilde{\alpha} = 0.05$ to ensure a low probability of misclassifying a variable as non-zero, that is, we require at least 95\% confidence that a selected variable truly differs from zero.

\section{Simulation}
\label{sec:Simulation}
This section evaluates the efficacy of our proposed \emph{BayesVFLReg} framework in estimating the coefficient matrix $\bC$ from model~(\ref{eq:mds_model}), leveraging induced reduced-rank priors. We present two different simulation scenarios. \textbf{Simulation 1} examines the performance of \emph{BayesVFLReg} on simulated data from~Model (\ref{eq:basic_reg_matrix}) with $n=500$, $P=300$, and $Q=5$. Here, we analyze our approach alongside Model~(\ref{eq:basic_reg}) applied to a centralized model that makes use of unsketched global data. This allows for the comparison of the posterior distributions of $\bC$ under varying privacy levels and sparsity degrees. These results provide insight into how the \emph{BayesVFLReg} approach stacks up against the ``gold standard" centralized setting, where all data are observed on a central server, revealing the impact of privacy constraints and model sparsity on estimation accuracy. Additionally, we assess computational efficiency of the \emph{BayesVFLReg} approach.

\textbf{Simulation 2} examines larger-scale data ($n=1{,}000$, $P=1{,}000$, $Q=5$), where direct benchmarking with full Bayesian computation of Model~(\ref{eq:basic_reg}) is computationally infeasible. In this context, we evaluate the \emph{BayesVFLReg} method against a state-of-the-art federated competitor, distributed ridge regression (\emph{dRidge}) \citep{gratton2018distributed}, as well as two efficient non-federated alternatives that operate on data centrally stored but offer greater computational tractability compared to full Bayesian modeling of Model~(\ref{eq:basic_reg}) for estimating $\bC$. Specifically, one approach follows a frequentist paradigm \citep{chen2012sparse}, while the other employs a variational Bayesian framework \citep{ruffieux2017efficient}. This comprehensive comparison enables us to rigorously assess both the accuracy and computational efficiency of \emph{BayesVFLReg}, relative to leading federated algorithms and conventional non-federated strategies for high-dimensional multivariate regression.

\subsection{Simulation 1: Effect of Privacy}

\textbf{Simulation 1} is designed to assess the performance of \emph{BayesVFLReg} across different levels of sparsity and privacy. We simulate $n = 500$ data observations from the $Q$-variate high-dimensional regression model (\ref{eq:basic_reg_matrix}), with $Q=5,$ $P=300$ and $\sigma_h = 1$ for all $h$. Each $P$-dimensional feature vector $\bX_i$ is simulated from a $N({\boldsymbol 0}, {\boldsymbol \Sigma})$, where 
   ${\boldsymbol \Sigma} = (1 - \rho)\bI_P + \rho\boldsymbol{J}_P,$ and $\boldsymbol{J}_P$ is a matrix of size $P \times P$ with all entries equal to 1. We fix $\rho = 0.5$ to ensure moderate correlation between all features.

The coefficient matrix ${\boldsymbol C}={\boldsymbol B}{\boldsymbol A}^\top$ is constructed by first simulating the entries of the matrix $\boldsymbol{A}$ from $N(0,1)$, and subsequently constructing $\boldsymbol{B}$ such that exactly $s$ rows of $\boldsymbol{B}$ are non-zero (with each non-zero entry sampled from N(0,1)). The simulation study varies $s$ across $10, 30$, and $50$ to represent different degrees of sparsity.
The constructed $\bC$, together with the simulated feature vectors $\bX_i$, is used to simulate the response matrix $\bY$. 

Our central objective is to study the influence of privacy and sparsity on the accuracy of the proposed \emph{BayesVFLReg} framework. Privacy protection is tuned by varying the sketch dimension, setting $m=50,100,250$. This results in normalized mutual information between $\bX$ and $\bX_\Phi$ (and between $\bY$ and $\bY_\Phi$) approximately $0.0005 (0.043)$, $0.001 (0.087)$, and $0.003 (0.217)$ bits, respectively, as calculated under the linear Gaussian model \citep{CoverThomas2006}. These values indicate a high degree of privacy in each simulation setting. In this context, bits quantify the adversary’s ability to infer the true data: each bit of leaked information halves the set of feasible guesses. Consequently, releasing a sketch that leaks $k$ bits of information reduces the adversary's plausible guesses by a factor of $2^k$ \citep{CoverThomas2006}.
We compare our approach to the ``gold standard" Bayesian multivariate reduced rank regression fit on the unsketched data in a central server, referred to as the \emph{centralized model}, repeating the analysis over 50 simulated datasets to ensure robustness.

We evaluate inferential performance using
\begin{align}
    \text{MSE} = \|\text{vec}(\widehat{\bC}) - \text{vec}(\bC)\|^2_2/qp, \text{      } \text{MSE}_{nz} = \|\text{vec}(\widehat{\bC}_{nz}) - \text{vec}(\bC_{nz})\|^2_2/qs,
\end{align} 
where $\widehat{\bC}$ is the posterior mean estimate of the entire coefficient matrix $\bC$, and $\widehat{\bC}_{nz}$ represents the posterior mean for the true non-zero entries (denoted by $\bC_{nz}$).

To further evaluate the accuracy of the estimated posterior distributions of the coefficient entries, we use the centralized model as the baseline and introduce a metric based on the Hellinger distance. This metric quantifies how closely the posterior for each coefficient obtained via the \emph{BayesVFLReg} approach, $\pi_m(c_{jh}\mid \bY_{\phi}, \bX_\phi)$, approximates the centralized posterior $\pi(c_{jh} \mid \bY, \bX)$:
\begin{equation}
    \text{Accuracy}_{j,h,m} = 1 - \frac{1}{2}\int_{\bC}\left(\sqrt{\pi_m(c_{jh}\mid \bY_{\phi}, \bX_\phi)} - \sqrt{\pi(c_{jh} \mid \bY, \bX)}\right)^2 \, dc_{jh}
\end{equation}
This accuracy metric, $\text{Accuracy}_{j,h,m}$, ranges from $0$ to $1$, with values near $1$ indicating strong agreement between the \emph{BayesVFLReg} and centralized posteriors. To provide more detailed insights, we summarize accuracy separately for zero and non-zero coefficients as follows:
\begin{align*}
\text{Accuracy}_{m,nz} = \frac{1}{sQ}\sum_{j,h: c_{jh} \neq 0} \text{Accuracy}_{j,h,m}, \\ 
\text{Accuracy}_{m,z} = \frac{1}{(P - s)Q}\sum_{j,h: c_{jh} = 0} \text{Accuracy}_{j,h,m}.
\end{align*}
Since the posterior distribution of $c_{jh}$ cannot be computed in closed form, we approximate both $\pi_m$ and $\pi$ numerically. Posterior samples of $\boldsymbol{B}$ and $\boldsymbol{A}$ are used to construct samples of $\bC$; subsequently, we employ the Bernstein-von Mises (Bayesian Central Limit Theorem) approximation as follows:
\begin{enumerate}
    \item Collect $B$ posterior draws of $c_{ij}^{(m)}$ and $c_{ij}$ from $c_{ij}^{(m)} \mid \bY_\phi, \bX_\phi$ and $c_{ij} \mid \bY, \bX$ respectively;
    \item Estimate the posterior mean and variances; $\Tilde{\mu}_m = \frac{1}{B}\sum_{b = 1}^B c_{ij}^{(m,b)}, \Tilde{\sigma}^2_m = \frac{1}{B - 1}\sum_{b = 1}^B(c_{ij}^{(m,b)} - \Tilde{\mu}_m)^2, \Tilde{\mu} = \frac{1}{B}\sum_{b = 1}^B c_{ij}^{(b)}, \Tilde{\sigma}^2 = \frac{1}{B - 1}\sum_{b = 1}^B(c_{ij}^{(b)} - \Tilde{\mu})^2$;
    \item Then given a random grid with values $c_{ij}^{(f)}$ for $f = 1, 2, \hdots, F$, 
    \begin{equation}
        \text{Accuracy}_{j,h,m} \approx 1 - \frac{1}{2F}\sum_{f = 1}^F\left(\sqrt{N(c_{ij}^{(f)} \mid \Tilde{\mu}_m, \Tilde{\sigma}^2_m)} - \sqrt{N(c_{ij}^{(f)} \mid \Tilde{\mu}, \Tilde{\sigma}^2)}\right)^2
        \label{eq:accuracy}
    \end{equation}
\end{enumerate}
This simulation also illustrates the computational efficiency gained by incorporating the global sketch. Let ESS$_m$ be the total effective sample size of $\bC$, obtained by summing the effective sample sizes across all coefficients when using $\boldsymbol{\Phi}$ with rank$(\boldsymbol{\Phi}) = m$ to generate 5,000 posterior samples (from 5 independent chains of 1,000 draws each with a burn-in of 200) in $T_m$ seconds.
\begin{equation}
    \text{Computational Efficiency}_m = \log_2\text{ESS}_m/T_m,
    \label{eq:comp_eff}
\end{equation}
\noindent where ESS$_m$ across features is computed using the \verb|coda| package in \verb|R|. Computational Efficiency is also computed for the non-federated posterior. All metrics are reported as averages across the 50 replicated datasets in each scenario.

\subsubsection{Simulation 1 Results}

We fit the models using MCMC chains with 5,000 draws, discarding the first 2,000 samples as burn-in and retaining the remaining 3,000 draws for posterior analysis.
Figure \ref{fig:MSE_overall} depicts the MSE and MSE$_{nz}$, respectively. Figure \ref{fig:MSE_overall} shows that, as privacy level decreases (with the increase of sketching dimension $m$), both MSE and $\text{MSE}_{nz}$ improve, indicating higher inferential accuracy. Furthermore, performance depends critically on the ratio $s/m$: as this ratio becomes smaller, the \emph{BayesVFLReg} model’s accuracy approaches that of the model fit to centralized data. This finding highlights that higher levels of sparsity allow for stronger privacy protection while maintaining nearly similar inference as that from the centralized model.

\begin{figure}[ht]
  \centering
    \includegraphics[width=1\linewidth]{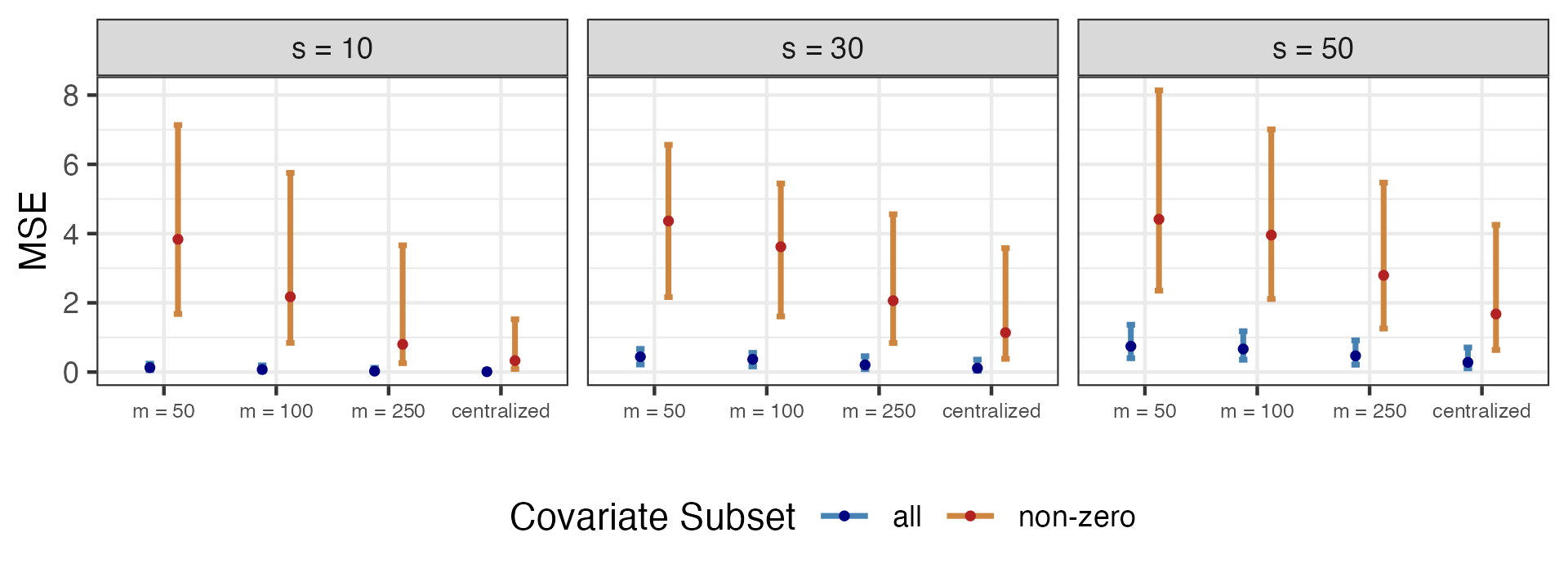}
  
  \caption{Figure shows a summary of the MSE (blue) and MSE$_{nz}$ (red) across the 50 replicates. The dots denote mean MSE with intervals around the mean showing standard errors over 50 replications. As privacy decreases (larger $m$), accuracy improves. At higher-degree of sparsity (smaller $s$), more privacy can be guaranteed while achieving competitive performance relative to the uncompressed model.}
  \label{fig:MSE_overall}
\end{figure}

Table \ref{tab:sim1_results} depicts both the accuracy and the computational efficiency at different privacy levels ($m = 50, 100, 250$), and degrees of sparsity. Table \ref{tab:sim1_results} reports Accuracy$_{m,nz}$, Accuracy$_{m,z}$, and computational efficiency across privacy levels and sparsity settings. The \emph{BayesVFLReg} posterior $\pi_m(\bC \mid \bY_{\boldsymbol \Phi}, \bX_{\boldsymbol \Phi})$ closely approximates the centralized posterior $\pi(\bC \mid \bY, \bX)$, with average accuracies near 1 in all scenarios. The computational efficiency metric shows that higher privacy (smaller $m$) yields substantially more efficient sampling than the full posterior, a benefit that becomes increasingly important as $P$ grows. The computational efficiency for \emph{BayesVFLReg} is found to be a few folds higher than the gold standard centralized posterior.

\begin{table}[H]
\centering
\begin{tabular}{|c|c|c|c|c|c|}
  \hline
 & & $s$ = 10 & $s = 30$ & $s$ = 50 \\ 
  \hline
 \multirow{3}{*}{Avg. Accuracy$_{nz}$} & $m = 50$ & 0.995 & 0.997 & 0.998 \\
  & $m = 100$ & 0.994 & 0.997 & 0.998 \\ 
  & $m = 250$ & 0.994 & 0.996 & 0.997 \\ 
  \hline
  \multirow{3}{*}{Avg. Accuracy$_z$} & $m = 50$ & 0.997 & 0.997 & 0.998 \\
  & $m = 100$ & 0.998 & 0.998 & 0.999 \\ 
  & $m = 250$ & 0.999 & 0.999 & 0.999 \\ 
  \hline
  \multirow{4}{*}{Comp. Efficiency} & $m = 50$ & 9.880 & 9.846 & 9.917 \\ 
  & $m = 100$ & 7.748 & 7.463 & 7.680 \\ 
  & $m = 250$ & 5.562 & 4.540 & 5.542 \\ 
  & centralized & 3.261 & 3.228 & 2.932 \\ 
   \hline
\end{tabular}
\caption{The top six rows depict the measure of accuracy of estimating the full posterior of $\bC$ from compressed data, as expressed in \ref{eq:accuracy}. Results are disaggregated by zero and non-zero features, varying sparsity levels, and different values of $m=50,100,250$ representing differing privacy levels. This metric is averaged over 50 replicates; values closer to the upper bound of 1 indicate better posterior estimates with compressed data. These results demonstrate that the compressed approach closely approximates the centralized posterior. The bottom 4 rows present computational efficiency (see \ref{eq:comp_eff}), also averaged over 50 replicates, illustrating that data compression substantially improves computational performance relative to uncompressed methods.}
\label{tab:sim1_results}
\end{table}

\subsection{Simulation 2: Global Data Sketch versus Competitors}

In \textbf{Simulation 2}, we simulate our data using $n = 1000$, $P = 1000$ and $Q = 5$. We follow the same data generation process as outlined in \textbf{Simulation 1}. However, we generate features by varying correlations among features ($\rho$) to assess performance with changing feature correlations. The scenarios are as follows: 

\noindent \textit{Scenario 1:} Independent features with $\rho = 0$.

\noindent \textit{Scenario 2:} Moderately correlated features with $\rho = 0.5$.

\noindent \textit{Scenario 3:} Strongly correlated features with $\rho = 0.7$.

\noindent 
Both $\boldsymbol{A}$ and $\boldsymbol{B}$ are generated as in \textbf{Simulation 1}, with varying levels of sparsity set at $s=10,30,50$. Once simulated, the data are randomly partitioned across $K=3$ separate sites, under the assumption that no communication occurs between sites. This simulation procedure is replicated 50 times for each combination of correlation scenario and sparsity level.


We choose $m = 200$ to keep the normalized mutual information below 0.0015 bits at each site between $\bX$ and $\bX_\Phi$ for  \textit{Scenarios 1, 2, \& 3}. To assess the impact of sketching on posterior estimation of $\bC$, we implement three competing estimators. As a \emph{BayesVFLReg} competitor, we use \emph{dRidge}, a distributed ridge regression approach proposed by \cite{gratton2018distributed}, which we fit separately to each response due to its univariate regression design. For non-VFL methods operating on the full, centralized data, we implement the variational Bayes approach \emph{locus} \citep{ruffieux2017efficient}, and the frequentist sparse reduced-rank regression method, \emph{srrr} \citep{chen2012sparse}. In the absence of a centralized posterior gold standard, \emph{locus} and \emph{srrr} serve as benchmarks for non-VFL performance, enabling a comparative assessment with \emph{BayesVFLReg}.

We evaluate inferential accuracy using the MSE and $\text{MSE}_{nz}$ metrics defined as in \textbf{Simulation 1}. We also examine the coverage of 95\% credible or confidence intervals, both overall and for the non-zero coefficients (coverage$_{nz}$). For \emph{dRidge}, we implement bootstrap-based repeated sampling for uncertainty quantification, which incurs substantially higher computational cost. For the frequentist \emph{srrr} competitor, we do not implement repeated sampling and therefore do not report uncertainty measures for \emph{srrr}. Finally, we apply the variable selection procedures described in Algorithm~\ref{alg:feature_classification} and compute the false discovery rate (FDR) for each method to evaluate its ability to identify important features.

\subsubsection{Simulation 2 Results}

Figure \ref{fig:MSE_plot_sim2} compares the MSE results  for all zero and non-zero values of $\bC$. We observe that the two VFL methods, \emph{BayesVFLReg} and \emph{dRidge}, achieve MSE performance comparable to the non-VFL competitors when correlation is low. As correlation among covariates increases, the \emph{BayesVFLReg} method increasingly outperforms \emph{locus}, while being comparable to the other competing methods, particularly for estimating nonzero coefficients (Figure \ref{fig:MSE_plot_sim2}). In high-correlation, low-sparsity settings, \emph{BayesVFLReg} provides notably more accurate estimates of the non-zero coefficients than the other VFL competitor \emph{dRidge}.

Figure \ref{fig:coverage_plt} presents coverage results across different correlation scenarios and sparsity levels. Overall, all methods achieve coverage close to one. However, consistent with prior literature on high-dimensional regression \citep{guhaniyogi2025bayesian}, all methods exhibit under-coverage when estimating non-zero coefficients. Notably, \emph{BayesVFLReg} provides substantially better coverage$_{nz}$ for non-zero coefficients compared to \emph{dRidge}, which shows pronounced under-coverage and much shorter 95\% credible intervals. The centralized method \emph{locus}, which uses variational approximation, also greatly underestimates the coverage for non-zero coefficients, yielding the narrowest intervals. For all methods, under-coverage of non-zero coefficients becomes more pronounced as correlation increases and sparsity decreases.

\begin{figure}[ht]
    \centering
    \includegraphics[width=1\linewidth]{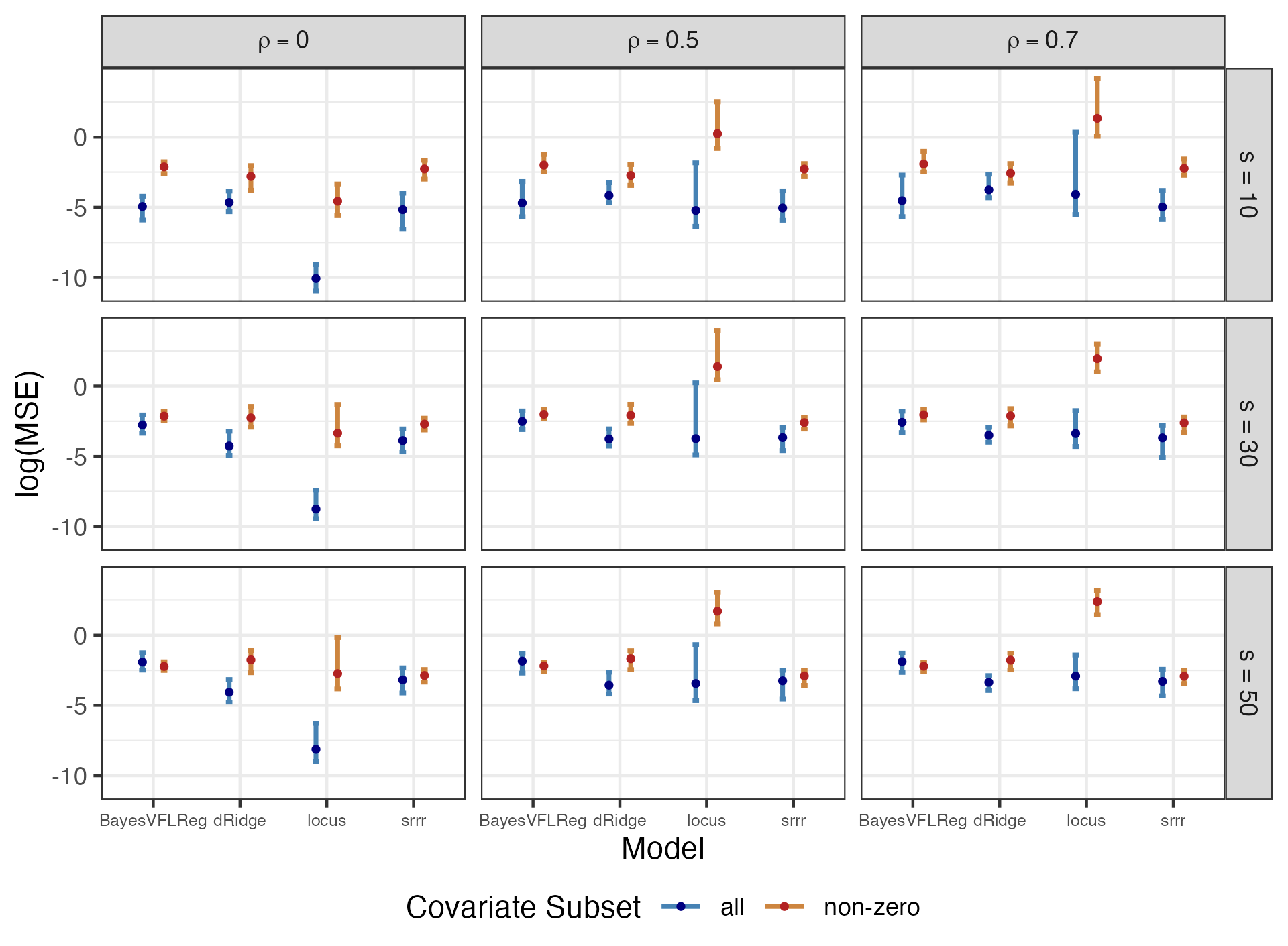}
     
    \caption{Figure depicts the MSE (blue) and MSE$_{nz}$ (red) of the four respective models; \emph{BayesVFLReg}, \emph{dRidge}, \emph{locus} and \emph{srrr} on the log scale. The points depict the average log(MSE) or log(MSE$_{nz}$) across replicates while the bars depict a 95\% interval of the replicates. 
    It can be observed that, as the correlation increases, the data sketching approach exhibits progressively better comparative performance.}
    \label{fig:MSE_plot_sim2}
\end{figure}

\begin{figure}[ht]
    \centering
    \includegraphics[width=1\linewidth]{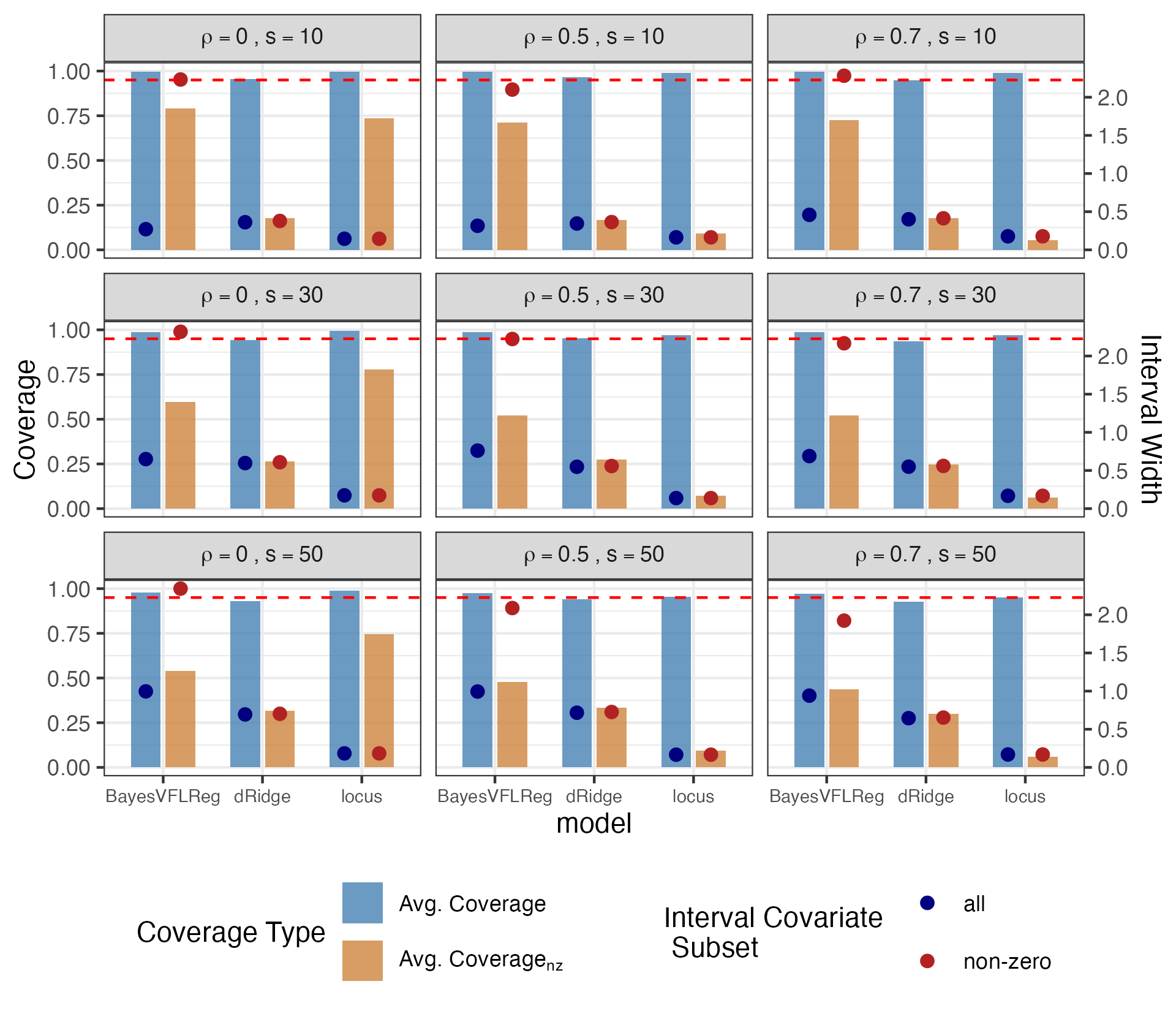}
  \vspace{-.7cm}
    \caption{Figure displays average coverage (blue) and average coverage$_{nz}$ (red) as bars, along with points denoting the average interval width under each simulation case. This demonstrates that \emph{BayesVFLReg} achieves superior overall coverage; however, its interval widths are larger than those of the other two methods, which, in turn, exhibit notably poorer performance for coverage$_{nz}$.}
    \label{fig:coverage_plt}
\end{figure}

\begin{table}[H]
\begin{adjustwidth}{-1in}{-1in}
\centering
\begin{tabular}{|c|c|c|c|c|c|c|c|c|c|}
\hline
   & \multicolumn{3}{|c|}{\textit{Scenario 1}} & \multicolumn{3}{c|}{\textit{Scenario 2}} & \multicolumn{3}{c|}{\textit{Scenario 3}} \\
  \cline{2-10}
   & $s = 10$ & $s = 30$ & $s = 50$ & $s = 10$ & $s = 30$ & $s = 50$ & $s = 10$ & $s = 30$ & $s = 50$ \\ 
  \hline
    BayesVFLReg & 0.004 & 0.037 & 0.063 & 0.074 & 0.117 & 0.159 & 0.132 & 0.079 & 0.202 \\ 
  dRidge & 0.000 & 0.000 & 0.000 & 0.000 & 0.000 & 0.000 & 0.000 & 0.000 & 0.000 \\ 
  locus & 0.789 & 0.561 & 0.435 & 0.797 & 0.575 & 0.464 & 0.804 & 0.586 & 0.471 \\ 
  srrr & 0.349 & 0.241 & 0.134 & 0.391 & 0.343 & 0.252 & 0.316 & 0.288 & 0.271 \\
  \hline
\end{tabular}
    \end{adjustwidth}
    \caption{Table reports the \textbf{average false discovery rate (FDR)}. Given $\tilde{\alpha} = 0.05$, this value should be close to 0.05. While \emph{dRidge} successfully separates non-zero from zero coefficients, as indicated by the coverage, it struggles to accurately estimate their magnitude.}
    \label{tab:sim2_results}
\end{table}

Table \ref{tab:sim2_results} depicts the FDR of the post processing procedure to identify important features. \emph{BayesVFLReg} outperforms each method in terms of FDR except  \emph{dRidge}, which appears to be the best performer. Taken together, these results indicate that \emph{dRidge} is effective at separating zero and non-zero features (very low FDR) but provides inferior uncertainty quantification, severely underestimating uncertainty of non-zero effects. The \emph{BayesVFLReg}, in contrast, delivers a more favorable trade-off: it achieves competitive MSE, higher coverage of non-zero coefficients, and reasonably low FDR across all sparsity and correlation scenarios, making it the most reliable option in settings with highly correlated, predominantly null features.

\FloatBarrier

\section{Study of Neuroimaging Data on Aging}
\label{sec:data_app}
We apply the Bayesian VFL Regression (\emph{BayesVFLReg}) to multi-modal human neuroimaging data collected first-hand by co-author Dr. Bernard, at the Lifespan Cognitive and Motor Neuroimaging Laboratory at Texas A\&M University. The original study recruited 138 healthy adults living independently aged 35 to 86 years. After excluding participants due to incomplete or noisy data, the final sample for our analysis consisted of 126 individuals (mean age: 57 years; 54\% female).
The participants underwent structural and resting-state magnetic resonance imaging (MRI), and data preprocessing was carried out according to \cite{Hicks_2023} and \cite{Ballard_2022}. Following the network definitions of \cite{Jackson_2023}, an initial set of 75 regions of interest (ROIs) were defined based on 6 key networks: the default mode, frontal-parietal, control, emotion, motor, and cerebellar-basal ganglia.  After excluding 6 ROIs for alignment with structural data, the final analysis used a set of 69 cortical and subcortical ROIs.

For each participant, a $69 \times 69$ correlation matrix was generated by cross-correlating the time series extracted from 69 regions of interest (ROIs). This analysis was performed in the CONN toolbox (version 21a) \citep{WhitfieldGabrieli2012ConnAF}. The upper triangle of each matrix was then vectorized, resulting in $2,346$ connectivity predictors for each subject. In addition, participants completed several behavioral measures: the Montreal Cognitive Assessment (\texttt{MoCA}) as a global cognitive screening tool; the Purdue Pegboard task, yielding bi-manual and unilateral dexterity scores (\texttt{Pegboard.Both}, \texttt{Avg.Right}, \texttt{Avg.Left}), a grip strength test, summarized as average left and right hand strength (\texttt{Avg.Left}, \texttt{Avg.Right}), and a Digit Symbol Coding task (\texttt{Digit.Symbol.Coding}) measuring processing speed, attention, and psychomotor efficiency. Collectively, these measures provide a cognitive-motor profile that serve as a behavioral basis for examining individual variations in functional connectivity.

Under a vertical federated learning framework, we assume that the dataset was vertically partitioned across two secure, independent sites: one hosting the MRI imaging data and the other holding the corresponding cognitive and motor test results for the same participant cohort. In compliance with data privacy protocols, raw data was not exchanged between the sites.

Consistent with a large body of neuroscientific literature, we hypothesized a sparse relationship between functional brain connectivity and behavior \citep{Varoquaux2013, Smith2009}. This implies that while the number of potential predictors (i.e., network edges) is large, a small subset of these connections should be meaningfully associated with participants' cognitive and motor abilities.
In the absence of ground-truth for the regression coefficient matrix ($\bC$) for real-world data, we evaluate our proposed \emph{BayesVFLReg} method by benchmarking it against the competitors, distributed ridge regression (\emph{dRidge}) \citep{gratton2018distributed}, \emph{locus} \citep{ruffieux2017efficient}, and sparse reduced rank regression (\emph{srrr}) \citep{chen2012sparse}. Our evaluation is designed to assess the model's predictive accuracy. To compare how well each model fit the out-of-sample behavioral data ($\bY$), we compute the ratio of their respective Mean Squared Errors (MSE):
    \[
        \text{MSE}_\bY = \frac{\|\text{vec}(\bY) - \text{vec}(\widehat{\bC}\bX) \|^2}{\|\text{vec}(\bY) - \text{vec}(\widetilde{\bC}\bX)||^2}
    \]
Where $\widetilde{\bC}$ represents the estimated coefficient matrix of the competitor and $\widehat{\bC}$ is the posterior mean from \emph{BayesVFLReg}. This ratio directly measures the predictive accuracy of our model relative to the competitors as a benchmark for each choice of $m$. A value less than one indicates a superior predictive fit for \emph{BayesVFLReg}. We present these results for $m \in \{20, 30, 50, 75, 100\}$ in Figure~\ref{fig:app_results}.

\begin{figure}[ht]
    \centering
        \includegraphics[width=.6\textwidth]{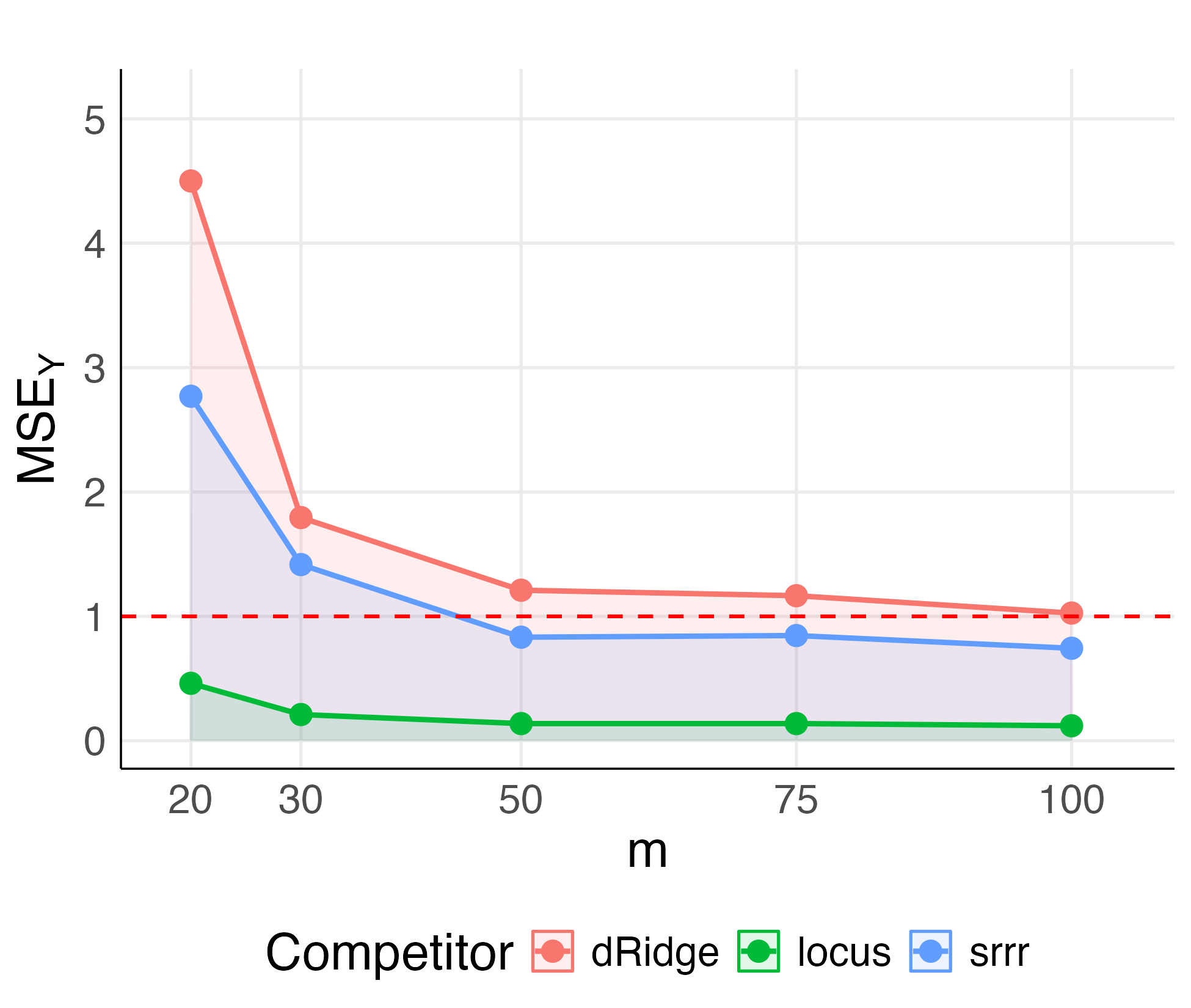} 
    \caption{
    Shows relative predictive performance, measured as the ratio of the Mean Squared Error (MSE$_\bY$) of \emph{BayesVFLReg} to that of each respective competitor. The horizontal reference line at 1.0 indicates performance parity with each competitor; values below this line demonstrate that \emph{BayesVFLReg} outperforms the competitor. }
    \label{fig:app_results}
\end{figure}

Based on the results presented in Figure \ref{fig:app_results}, $m=50$ is selected  for subsequent post-processing. This value is chosen as yields a near-minimal predictive Mean Squared Error (MSE$_\bY$) comparatively across competitors.


\begin{figure}[ht]
    \centering
    \begin{subfigure}[b]{0.39\textwidth}
        \centering
        \includegraphics[width=\textwidth]{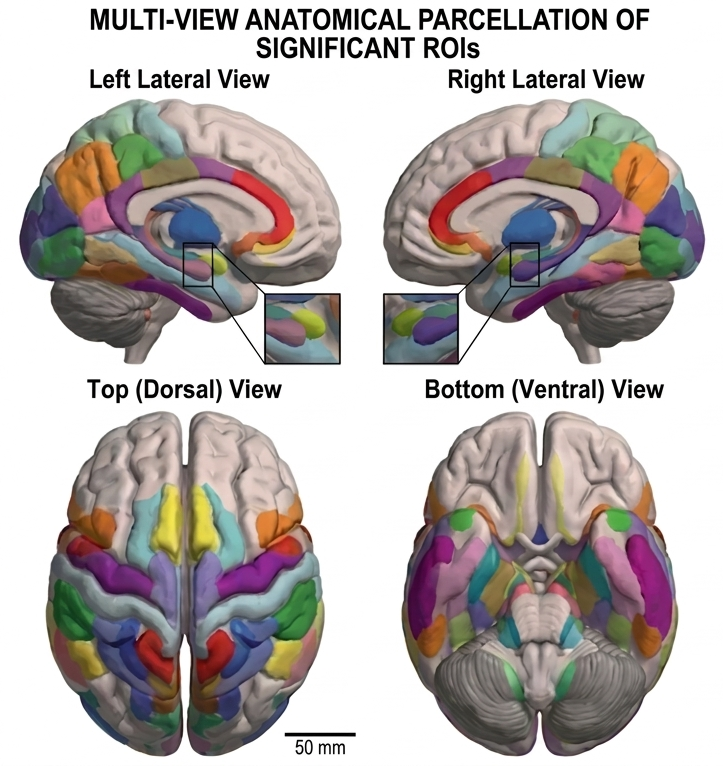} 
        \caption{Brain Images with Highlighted ROIs}
        \label{fig:brain_img}
    \end{subfigure}
    \hfill 
    \begin{subfigure}[b]{0.59\textwidth}
        \centering
        \includegraphics[width=\textwidth]{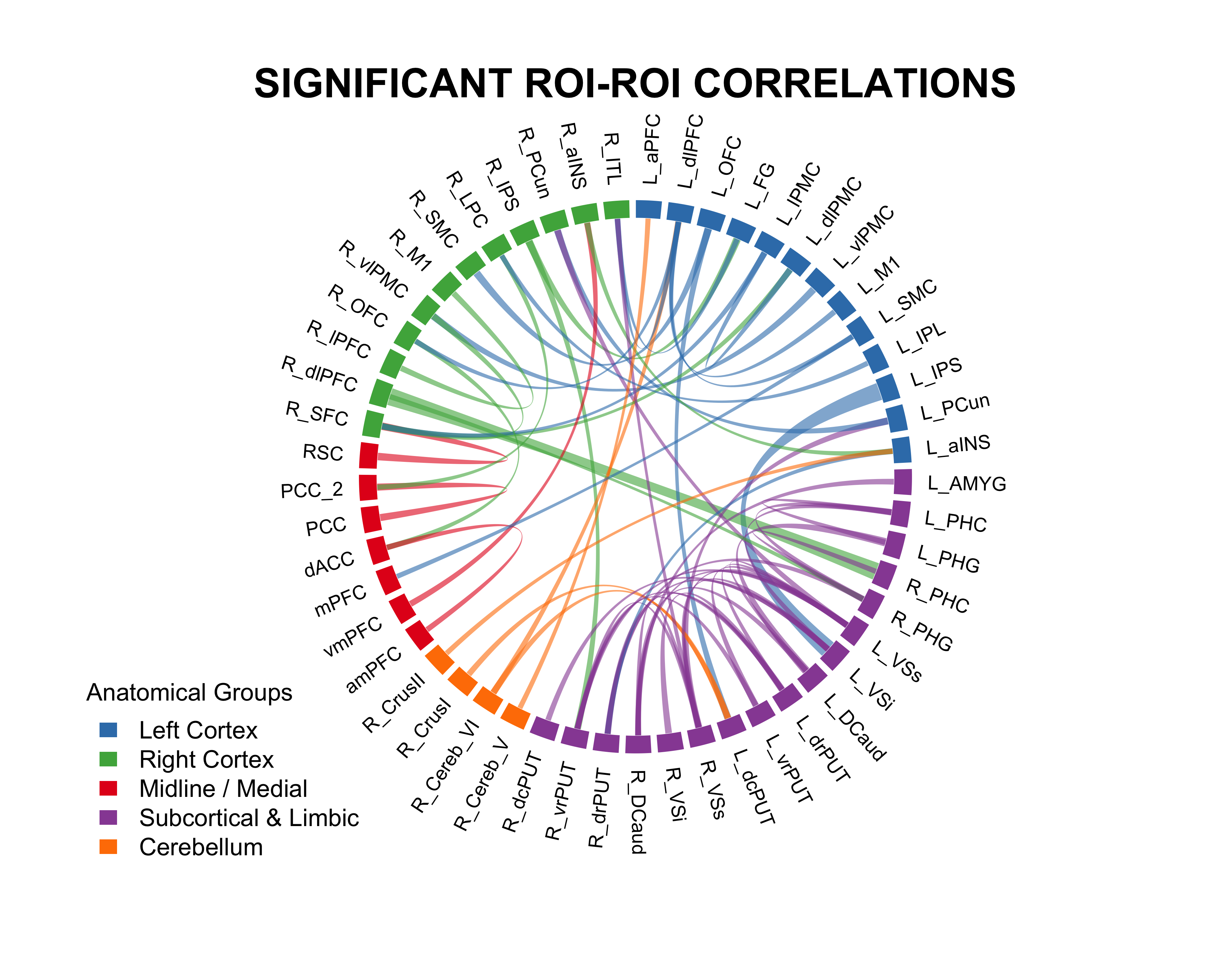} 
        \caption{Significant brain connections}
        \label{fig:connectogram}
    \end{subfigure}
    \caption{The significant ROI-ROI correlations on the participants' cognitive and motor functions are depicted here. Figure (\ref{fig:brain_img}) depicts the brain highlighting the ROIs that were found among the 60 significant links. Figure (\ref{fig:connectogram}) illustrates the significant ROI-ROI correlations with the edge width depicting the absolute strength of the relationship between the correlation and the cognitive and motor functions of the participants found by taking the row means of the posterior mean of $\bC$. A full list of significant ROI with their shorthand mappings can be found in Table \ref{tab:roi_key} from the Appendix.
    }
    \label{fig:brain_connect}
\end{figure}

Our analysis of 2,346 potential network connections identified a key subset of $60$ links that significantly influenced participants' cognitive and motor functions, these $60$ links are depicted within Figure~\ref{fig:brain_connect}. Across the set of significant ROI-ROI correlations, a consistent pattern emerged in large-scale networks that plausibly underlie both cognitive and motor outcomes. Many effects involved frontoparietal and default-mode nodes, including the left inferior parietal lobule, lateral parietal cortex, posterior cingulate/retrosplenial regions, and mesial prefrontal cortex. These regions are central to multimodal integration, attention, episodic memory, and higher-order cognitive control, suggesting that altered coupling within and between parietal and midline default-mode structures is associated with patients’ cognitive performance. Parahippocampal connections further implicate medial temporal-default-mode circuitry, consistent with effects on memory and global cognition.

A second prominent theme was the involvement of cortico-striato-motor loops. The superior ventral striatum, dorsal caudate, and ventral/dorsal rostral putamen appeared frequently, with significant correlations to cingulate (medial, supragenual, dorsal anterior), anterior insula, mesial prefrontal cortex, and multiple premotor and primary sensorimotor ROIs (e.g., left lateral premotor cortex, dorsolateral premotor cortex, primary sensorimotor cortex). This pattern is characteristic of basal ganglia circuits supporting motor control, action selection, and reward-based decision-making, and aligns well with the observed relationships to motor measures such as pegboard performance. ROIs that recur across many significant edges include the left dorsal caudate, left ventral/dorsal rostral putamen, right superior ventral striatum, medial/dorsal anterior cingulate cortex, left lateral/dorsolateral premotor cortex, inferior/lateral parietal cortex, and posterior/retrosplenial cingulate, highlighting these regions as key hubs where connectivity differences are most strongly linked to patients’ cognitive and motor abilities.

\FloatBarrier

\section{Theoretical Guarantees for BayesVFLReg}\label{sec:VFLreg_theory}
\subsection{Motivation and Overview}
This section establishes theoretical gurantee for \emph{BayesVFLReg} framework described in Section~\ref{sec:VFL_MVR}. For algebraic simplicity, we set $\sigma_1^2=\cdots=\sigma_Q^2$ and set them at $1$, without loss of generality. While Section~\ref{sec:VFL_MVR} performs full posterior computation using the standard posterior, our theoretical analysis, however, is conducted under the $\alpha$-fractional posterior framework with $\alpha\in(0,1)$, following popular approaches \citep{chakraborty2020bayesian}. The adoption of the fractional posterior is primarily a technical convenience: it requires only a prior mass condition on Kullback-Leibler (KL) neighborhoods of the truth to establish posterior concentration, whereas the standard posterior additionally requires the construction of sieves and exponentially consistent test functions, which are difficult to verify for heavy-tailed shrinkage priors. In this section, we further show how convergence of $\alpha$-fractional posterior to  the full posterior when $\alpha\rightarrow 1$. Consequently, any concentration result established for the fractional posterior at $\alpha\in(0,1)$ carries over to the standard posterior used in \emph{BayesVFLReg}.

\subsection{Notations and Framework}
\subsubsection{Divergence Measures}
We collect here the divergence measures used in the theoretical analysis. For two 
probability densities $p$ and $q$ defined with respect to a common dominating 
measure $\mu$:

\begin{itemize}
    \item \textbf{Hellinger distance:} $ h^2(p, q) = \int \bigl(\sqrt{p} - \sqrt{q}\bigr)^2 \, d\mu.$

     \item \textbf{Total Variation distance:} $ TV(p, q) = \int \bigl |p - q| \, d\mu.$

    \item \textbf{Kullback--Leibler (KL) divergence:} $\mathrm{KL}(p, q) = \int p \log(p/q)\, d\mu.$

    \item \textbf{R\'{e}nyi divergence} (of order $\alpha \in (0,1)$):
    $D_\alpha(p, q)
        = \frac{1}{\alpha - 1}
          \log \int p^\alpha q^{1-\alpha} \, d\mu.$

    \item \textbf{$\alpha$-affinity:}
    $A_\alpha(p, q)
        = \int p^\alpha q^{1-\alpha} \, d\mu
        = e^{-(1-\alpha) D_\alpha(p,q)}.$
        
    When $\alpha = 1/2$, $A_{1/2}(p,q)$ coincides with the Hellinger affinity.
    These divergences satisfy the equivalence relations
    \[
        D_{1/2}(p,q) \geq 2h^2(p,q),
        \qquad
        \frac{\alpha}{\beta}\cdot\frac{1-\beta}{1-\alpha} D_\beta
        \leq D_\alpha \leq D_\beta,
        \quad 0 < \alpha \leq \beta < 1.
    \]
\end{itemize}
Matrices are denoted by bold uppercase letters; $\|\cdot\|_F$ and $\|\cdot\|_2$ denote the Frobenius and $l_2$ norms respectively; $\lambda_{\max}(\bH)$ denotes the largest eigenvalue of a square matrix $\bH$.
\subsubsection{Data Generating Model}
Recall from Section~\ref{sec:VFL_MVR} that the true data-generating model is
\begin{equation}\label{eq:true_model_th}
    \bY = \bX\bC^* + \bE, 
    \qquad 
    {\boldsymbol e}_i \overset{\mathrm{iid}}{\sim} \mathcal{N}_Q(\bzero, \bI_Q),
    \tag{1}
\end{equation}
where $\bC^* \in \mathbb{R}^{P \times Q}$ is the true coefficient matrix generating the data. Similar to fitted model, we assumed unit variance for the true data generating model. Pre-multiplying both sides of equation~(\ref{eq:true_model_th}) by the shared sketching 
matrix $\bPhi \in \mathbb{R}^{m \times n}$ yields 
the \emph{true sketched model}:
\begin{equation}
    \bY_{\bPhi}= \bX_{\bPhi}\bC^* + \bPhi \bE, 
    \qquad 
    \bPhi \bE \sim \mathrm{MatrixNormal}(\bzero,\, \bPhi\bPhi^T,\, \bI_Q),
    \tag{2}
\end{equation}
This acts as the true data generating sketched model in our theoretical analysis. Let the density of our fitted model with regression coefficient $\bC$ is denoted 
$p^{(m)}(\bY_{\bPhi}\mid \bX_{\bPhi}, \bC)$, and the density of the true data generating model with the true regression coefficient $\bC^*$ is denoted $p^{(m)}(\bY_{\bPhi}\mid \bX_{\bPhi},\bC^*)$.

\subsection{Assumptions}
\label{sec:assumptions}


\medskip

\noindent\textbf{Assumption 1 (Fixed response dimension).}
The number of response variables $Q$ is fixed as $n \to \infty$.

\medskip

\noindent\textbf{Assumption 2 (Sparse low-rank true coefficient).}
The true coefficient matrix admits the decomposition $\bC^* = \bB^* \bA^{*T},$
where $\bB^* \in \mathbb{R}^{P \times R^*}$, $\bA^* \in \mathbb{R}^{Q \times R^*}$, and $R^* = \kappa Q\quad\text{for some}\quad\kappa \in \{1/Q, 2/Q, \ldots, 1\}.$ The matrix $\bA^*$ is semi-orthogonal, $(\bA^*)^T \bA^* = I_{R^*}$, and all but $s^*$ rows of $\bB^*$ are identically zero, where $s^* = o(P)$. Furthermore,
$\max_{j,h} \bigl| C^*_{j,h} \bigr| < T$ for some $T > 0$.

\medskip

\noindent\textbf{Assumption 3 (Bounded design matrix).}
For $\bX_j$ the $j$th column of $\bX$, $\max_{1 \leq j \leq P} \|\bX_j\|^2 = O(n).$

\medskip

\noindent\textbf{Assumption 4 (Sketching dimension).}
The sketching dimension $m$ satisfies $s^* \log(P) = o(m),\: m = o(n).$

\medskip

\noindent\textbf{Assumption 5 (Concentration of sketching matrix).}
For some positive constants $c', c'' > 0$, $\|\bPhi\bPhi^T - I_m\|_2 \leq c' \sqrt{m/n},$
\medskip

\noindent\textbf{Remark:} Assumption 5 holds with probability at least $1 - e^{-c' m}$ over the construction of $\bPhi$ in this article. This holds almost surely for Gaussian sketching matrices \cite{vershynin2011spectral, guhaniyogi2025sketching}.

\noindent\textbf{Remark.}
Assumption~5 connects the sketching dimension to the intrinsic complexity of the true model: choosing
\[
m \asymp s^* \log P
\]
(up to constants) is the minimal sketching dimension needed to preserve the sparse low-rank structure of $\bC^*$. Assumption~2 mirrors the structure exploited by the shrinkage prior on $\bC$ in Section~2.2.

\bigskip

\subsection{Fractional Posterior and Contraction Rate}
\label{sec:fractional_posterior_rate}

For $\alpha \in (0,1)$, the $\alpha$-fractional posterior for $\bC$ under the approximating sketched model is
\begin{equation}
\Pi_\alpha(\bC \mid \tilde{\bY}, \bX_{\bPhi})
\;\propto\;
\left\{
p^{(m)}(\bY_{\bPhi}\mid \bX_{\bPhi}, \bC)
\right\}^\alpha
\pi(\bC),
\tag{3}
\end{equation}
where $\pi(\bC)$ is the prior on $\bC = \bB\bA^T$ induced by the shrinkage prior on $\bB$ and the standard Gaussian prior on $\bA$ specified in Section~2.2. The contraction rates derived below for $\Pi_\alpha$ with $\alpha \in (0,1)$ carry over to the standard posterior $\Pi_1$ used in \emph{BayesVFLReg}, as we argue in this section through Theorem~\ref{TV_thm}.

Define the \emph{contraction rate} for the sketched model as
\begin{equation}
\epsilon_m^2
=
\frac{Q R^* + R^* s^* \log P}{m}.
\tag{4}
\end{equation}
This is the analogue of the minimax rate $\frac{Q R^* + R^* s^* \log P}{n}$
for the full-data model~\cite{bunea2012joint}, with the effective sample size replaced by the sketching dimension $m$. Assumption~5 ensures $m \epsilon_m^2 \to \infty$, which is necessary for the posterior to concentrate.

\subsection{Main Results}
We state the three main theoretical results. All proofs are deferred to the Appendix.

\medskip

\begin{lemma}[Prior concentration in the sketched model]
\label{lem:prior_conc_main}
Define the KL neighborhood of the true coefficient matrix at radius $\epsilon_m$ as
\[
B_m^*(\bC^*, \epsilon_m)
=
\left\{
\bC :
\mathrm{KL}\!\left(
p^{(m)}(\bY_{\bPhi}\mid \bX_{\bPhi}, \bC^*),\,
p^{(m)}(\bY_{\bPhi}\mid \bX_{\bPhi}, \bC)
\right)
\leq m\epsilon_m^2
\right\}.
\]
Under Assumptions 1-5, there exists a positive constant $K$ such that, 
\[
\pi\!\left(B_m^*(\bC^*, \epsilon_m)\right) \geq e^{-K m\epsilon_m^2}.
\]
\end{lemma}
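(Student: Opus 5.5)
Our plan is to first compute the KL divergence in closed form and reduce the KL neighborhood to a Frobenius-norm ball around $\bC^*$. We then lower bound the prior mass of that ball using the product structure $\bC=\bB\bA^T$ and the global-local shrinkage prior.

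\textbf{Step 1: computing the KL divergence.} The true sketched density is MatrixNormal$(\bX_{\bPhi}\bC^*,\bPhi\bPhi^T,\bI_Q)$ and the fitted density is MatrixNormal$(\bX_{\bPhi}\bC,\bI_m,\bI_Q)$. Writing $\bH=\bPhi\bPhi^T$, the Gaussian KL formula gives
\[
\mathrm{KL}=\tfrac{Q}{2}\bigl\{\mathrm{tr}(\bH)-m-\log\det\bH\bigr\}+\tfrac12\|\bX_{\bPhi}(\bC-\bC^*)\|_F^2 .
\]
\emph{Covariance-mismatch term.} By Assumption 5 every eigenvalue $\mu_i$ of $\bH$ satisfies $|\mu_i-1|\le c'\sqrt{m/n}$. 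Since $x-1-\log x\le (x-1)^2$ near $1$, this term is at most $Qm\,c'^2 m/n$. Assumptions 1 and 4 give $m=o(n)$, and $m\epsilon_m^2\ge R^*s^*\log P$, so this term is $o(m\epsilon_m^2)$ provided $m^2/n\lesssim s^*\log P$. We expect this to be the main obstacle. As stated, Assumptions 4 and 5 do not obviously guarantee it. Our first attempt will be to use the sharper bound $\mathrm{tr}(\bH)-m-\log\det\bH=\sum(\mu_i-1-\log\mu_i)\approx\frac12\sum(\mu_i-1)^2$ together with the fact that for Gaussian $\bPhi$ this sum is of order $m^2/n$. If that does not suffice, we will fold the condition $m^2/n=O(s^*\log P)$ into the rate and absorb the term into a constant multiple of $m\epsilon_m^2$, which only changes $K$.

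\textbf{Step 2: reducing to a Frobenius ball.} Using $\|\bX_{\bPhi}\Delta\|_F^2\le\|\bPhi\|_2^2\|\bX\Delta\|_F^2$ with $\|\bPhi\|_2^2\le 2$, and Assumption 3 with Cauchy–Schwarz over the support, we get $\|\bX\Delta\|_F^2\lesssim n\,(\sum_j\|\Delta_{j\cdot}\|)^2$. This factor $n$ is too large compared with $m\epsilon_m^2$. We therefore require instead that $\|\Delta\|_{1}\le \delta_m:=\epsilon_m\sqrt{m/n}\,/\,c$ entrywise in $\ell_1$ norm. It then suffices to show
\[
\pi\bigl(\|\bC-\bC^*\|_1\le\delta_m\bigr)\ge e^{-Km\epsilon_m^2}.
\]

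\textbf{Step 3: prior mass.} Embed the truth by padding $\bA^*,\bB^*$ with zero columns up to rank $Q$. Write
\[
\bB\bA^T-\bB^*\bA^{*T}=(\bB-\bB^*)\bA^T+\bB^*(\bA-\bA^*)^T
\]
and restrict to the event where
\begin{enumerate}
\item[(a)] $\|\bA-\bA^*\|_\infty\le\eta$, with $\bA$ Gaussian, which costs $e^{-QR\log(1/\eta)}$;
\item[(b)] the $s^*R^*$ active entries of $\bB$ are within $\eta$ of the truth, using the positivity and boundedness of the horseshoe density on $[-T',T']$ conditional on $\tau$ lying in a fixed interval, which costs $e^{-s^*R^*\log(1/\eta)}$;
\item[(c)] the remaining $(P-s^*)Q$ entries have $|b_{pq}|\le\eta/P$.
\end{enumerate}
For (c) we use the standard horseshoe tail bound $\Pi(|b|>u)\lesssim \tau\,u^{-1}$. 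Conditioning on $\tau_q\le P^{-2}$, an event of prior mass $\gtrsim P^{-2}$, gives probability at least $(1-1/P)^{PQ}\gtrsim e^{-Q}$.

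Choosing $\eta$ polynomial in $\delta_m/(PT)$ gives $\log(1/\eta)\asymp\log P$, using $\log(n/m)\lesssim\log P$ or absorbing it otherwise. The total log prior mass is then $\gtrsim-(QR^*+s^*R^*)\log P\asymp -m\epsilon_m^2$, which yields the constant $K$.
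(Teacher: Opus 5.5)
Your Step~1 coincides with the paper's, and your suspicion about the covariance-mismatch term is well founded. It in fact exposes a flaw in the paper's own Step~2. Assumption~5 bounds only the operator norm, so $\tfrac{Q}{2}\sum_i(\mu_i-1-\log\mu_i)$ is controlled only at order $Qm^2/n$. For Gaussian $\bPhi$ it really is of that order, since $E\|\bPhi\bPhi^T-\bI_m\|_F^2=m(m+1)/n$. The paper reaches $Qc''^2m/(2n)$ only by implicitly replacing Assumption~5 with a Frobenius bound $\|\bPhi\bPhi^T-\bI_m\|_F^2\le c''^2m/n$, which neither follows from it nor holds for Gaussian sketches. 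For the same reason your ``first attempt'' via the second-order expansion cannot succeed.

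Your fallback, however, is misdescribed. The term does not depend on $\bC$, so whenever it exceeds $m\epsilon_m^2$ the set $B_m^*(\bC^*,\epsilon_m)$ is empty and no choice of $K$ helps. For example, take $\bPhi\bPhi^T=(1+c'\sqrt{m/n})\bI_m$, $m=n^{0.9}$, $P=n$, $Q=R^*=s^*=1$: Assumptions 1--5 hold, but the term is of order $n^{0.8}\gg m\epsilon_m^2\asymp\log n$. So the lemma genuinely needs an extra quantitative hypothesis, e.g.\ $Qc'^2m^2/n\le m\epsilon_m^2$, which makes the term at most $\tfrac12 m\epsilon_m^2$. Alternatively, the KL radius can be inflated by a constant, but that then propagates into Lemma~\ref{thm:Dm_lower_main} and Theorem~\ref{thm:renyi_main}. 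It is not a matter of adjusting $K$.

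For the prior mass your route genuinely differs from the paper's. The paper bounds $\|\bX_{\bPhi}(\bC-\bC^*)\|_F^2\le\|\bX_{\bPhi}\|_F^2\|\bC-\bC^*\|_F^2$ and then cites Lemma~S5 of \cite{chakraborty2020bayesian} for the mass of a Frobenius ball. Your entrywise-$\ell_1$ reduction is valid and tracks the dimension factor honestly; the paper's $Q\max_j\|\bX_{\bPhi j}\|^2$ should read $P\max_j\|\bX_{\bPhi j}\|^2$. But the hand computation in Step~3 has a real hole.

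\begin{itemize}
\item In the $R^*$ signal columns you cost (b) with $\tau_q$ in a fixed interval, while (c) conditions on $\tau_q\le P^{-2}$. These are incompatible, so the two costs cannot be multiplied.
\item Since $\Pi(|b|>u\mid\tau)\lesssim\tau/u$, the bound $\Pi(|b_{pq}|>\eta/P\mid\tau_q)\le 1/P$ needs $\tau_q\lesssim\eta P^{-2}$, not $\tau_q\le P^{-2}$.
\end{itemize}

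The missing ingredient is the polynomial tail of the horseshoe: for small $\tau_q$, the conditional density obeys $\pi(b\mid\tau_q)\gtrsim\tau_q/T'^2$ uniformly on $|b|\le 2T'$. Restrict $\tau_q\in[\eta P^{-3},c\,\eta P^{-2}]$, which has prior mass $\gtrsim\eta P^{-2}$. Then every null entry stays below $\eta/P$ with probability $\gtrsim e^{-2Q}$; this includes the $s^*(Q-R^*)$ padded entries in the active rows, which your split omits. Simultaneously, each of the $s^*R^*$ signal entries lands within $\eta$ of its target at cost $O(\log(P/\eta))$; these targets are bounded by $T'=QT$ because $\bB^*=\bC^*\bA^*$.

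Finally, since $\delta_m\asymp\{(QR^*+R^*s^*\log P)/n\}^{1/2}$, your claim $\log(1/\eta)\asymp\log P$ needs $\log(1/\delta_m)\lesssim\log P$, essentially $\log n\lesssim\log P$. This is not among Assumptions 1--5. Because $s^*=o(P)$ forces $m\epsilon_m^2=O(P\log P)$, it cannot be absorbed into $K$ when $\log P=o(\log n)$, so state it explicitly; the paper never addresses it, relying on the cited lemma. With these repairs and the two added hypotheses, your argument closes and is more self-contained than the paper's.
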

Lemma~\ref{lem:prior_conc_main} establishes that the shrinkage prior induced by the \emph{BayesVFLReg} prior specification assigns exponentially large mass to Kullback–Leibler neighborhoods of the truth
in the sketched model. 

\medskip\medskip

\begin{lemma}[Lower bound for the fractional posterior denominator]
\label{thm:Dm_lower_main}
Let
\[
D_m
=
\int e^{-\alpha\, r(\bC, \bC^*)}\, \pi(\bC)\, d\bC,
\qquad
r(\bC, \bC^*)
=
\log\frac{
p^{(m)}(\bY_{\bPhi}\mid \bX_{\bPhi}, \bC^*)
}{
p^{(m)}(\bY_{\bPhi}\mid \bX_{\bPhi}, \bC)
}.
\]
Under Assumptions 1-5, for any $D > 1$ and $t > 0$, with $P_{\bC^*}^{(m)}$-probability at least
$1 - K'/\{(D-1+t)^2 m\epsilon_m^2\}$,
\[
D_m \;\geq\; e^{-(D+t)m\epsilon_m^2},
\]
for some positive constant $K'>0$.
\end{lemma}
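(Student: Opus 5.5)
This is the standard denominator lower bound from the fractional-posterior literature (e.g.\ Bhattacharya--Pati--Yang, as used in \cite{chakraborty2020bayesian}). The plan is to restrict the integral to a KL-type neighbourhood and then apply Chebyshev's inequality. In that neighbourhood we control both the mean and the variance of the log-likelihood ratio $r(\bC,\bC^*)$.

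\textbf{Step 1 (restriction).} Let $B_m$ be the subset of $B_m^*(\bC^*,\epsilon_m)$ on which, in addition,
\[
V(\bC):=\mathrm{Var}_{\bC^*}\{r(\bC,\bC^*)\}\le K_0\, m\epsilon_m^2 .
\]
The first task is to show that Lemma~\ref{lem:prior_conc_main} still holds on $B_m$, i.e.\ $\pi(B_m)\ge e^{-K m\epsilon_m^2}$, possibly with a different constant. Under the true sketched model $\bY_{\bPhi}-\bX_{\bPhi}\bC^*=\bPhi\bE$, and the fitted density has identity covariance. Hence
\[
r(\bC,\bC^*)=\tfrac12\|\bPhi\bE-\bX_{\bPhi}\bDelta\|_F^2-\tfrac12\|\bPhi\bE\|_F^2,\qquad \bDelta=\bC-\bC^*.
\]
This expands to $\tfrac12\|\bX_{\bPhi}\bDelta\|_F^2-\langle \bPhi\bE,\bX_{\bPhi}\bDelta\rangle$. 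Its mean is $\tfrac12\|\bX_{\bPhi}\bDelta\|_F^2$, and its variance is $\mathrm{tr}(\bDelta^T\bX_{\bPhi}^T\bPhi\bPhi^T\bX_{\bPhi}\bDelta)$. By Assumption~5, $\|\bPhi\bPhi^T\|_2\le 1+c'\sqrt{m/n}\le 2$, so $V(\bC)\le 2\|\bX_{\bPhi}\bDelta\|_F^2$. Both mean and variance are therefore controlled by $\|\bX_{\bPhi}\bDelta\|_F^2$. The KL ball in Lemma~\ref{lem:prior_conc_main} is essentially the set $\{\|\bX_{\bPhi}\bDelta\|_F^2\lesssim m\epsilon_m^2\}$, up to the fixed offset from the covariance mismatch $\bPhi\bPhi^T$ versus $\bI_m$. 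That offset is handled in the proof of Lemma~\ref{lem:prior_conc_main}.

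The main obstacle is precisely this offset. The KL divergence carries a term $\tfrac{Q}{2}\{\mathrm{tr}(\bPhi\bPhi^T)-m-\log\det\bPhi\bPhi^T\}$, which must be $O(m\epsilon_m^2)$. Assumption~5 gives $\|\bPhi\bPhi^T-I_m\|_2\le c'\sqrt{m/n}$, so this term is $O(m\cdot m/n)$, which is $o(m\epsilon_m^2)$ when $m/n\lesssim \epsilon_m^2$. I would check that Assumption~4 delivers this. Otherwise I would simply define $B_m$ directly as the $\|\bX_{\bPhi}\bDelta\|_F$-ball used inside the proof of Lemma~\ref{lem:prior_conc_main}, with the bias term absorbed into the constant $D$.

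\textbf{Step 2 (Jensen).} Let $\tilde\pi=\pi(\cdot\cap B_m)/\pi(B_m)$ be the normalised restriction. Then
\[
D_m\ge \pi(B_m)\int_{B_m}e^{-\alpha r}\,d\tilde\pi\ge \pi(B_m)\exp\Big\{-\alpha\int r\,d\tilde\pi\Big\}.
\]
Set $Z=\int r\,d\tilde\pi$. By Fubini, $E Z=\int E r\,d\tilde\pi\le m\epsilon_m^2$. By Jensen/Cauchy--Schwarz, $\mathrm{Var}(Z)\le\int V\,d\tilde\pi\le K_0 m\epsilon_m^2$.

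\textbf{Step 3 (Chebyshev).} Chebyshev's inequality gives
\[
P\big(Z\ge (D-1+t)m\epsilon_m^2 + m\epsilon_m^2\big)\le \frac{K_0 m\epsilon_m^2}{(D-1+t)^2m^2\epsilon_m^4}=\frac{K_0}{(D-1+t)^2m\epsilon_m^2}.
\]
On the complement, $D_m\ge \exp\{-K m\epsilon_m^2-\alpha(D+t)m\epsilon_m^2\}$. The final step is to match the stated form: since $\alpha<1$, either rescale $\epsilon_m$ by a constant or relabel $D$ to absorb the constant $K$ from Step~1. This yields $D_m\ge e^{-(D+t)m\epsilon_m^2}$ with probability at least $1-K'/\{(D-1+t)^2m\epsilon_m^2\}$.
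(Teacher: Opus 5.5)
Your overall route is exactly the paper's: restrict to the KL ball of Lemma~\ref{lem:prior_conc_main}, apply Jensen under the normalised restricted prior, use Chebyshev on the averaged log-likelihood ratio, then absorb the prior-mass factor. The gap is in how you identify $r$, and hence its variance.

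In the statement, $p^{(m)}(\bY_{\bPhi}\mid\bX_{\bPhi},\bC^*)$ is the \emph{true} sketched density, whose rows have covariance $\bPhi\bPhi^T$. It is not the fitted identity-covariance density evaluated at $\bC^*$. This is also what the proof of Theorem~\ref{thm:renyi_main} needs, because there $E_{\bC^*}[e^{-\alpha r}]$ must be an $\alpha$-affinity with the data-generating density. Your formula $r=\tfrac12\|\bPhi\bE-\bX_{\bPhi}\bDelta\|_F^2-\tfrac12\|\bPhi\bE\|_F^2$ therefore omits the $\bC$-free piece
\[
\tfrac12\|\bPhi\bE\|_F^2-\tfrac12\mathrm{tr}\{\bE^T\bPhi^T(\bPhi\bPhi^T)^{-1}\bPhi\bE\}-\tfrac{Q}{2}\log\det(\bPhi\bPhi^T).
\]
Its mean is the covariance-mismatch offset you discuss, and its variance is $T_1=\tfrac{Q}{2}\|\bPhi\bPhi^T-\bI_m\|_F^2$.

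Your mean bound survives: on $B_m^*$, $E_{\bC^*}r$ is the KL divergence, which is at most $m\epsilon_m^2$ by definition. But $V(\bC)\le 2\|\bX_{\bPhi}\bDelta\|_F^2$ is false for the $r$ in the statement, and $T_1$ is never bounded in your plan. Your set $B_m$ could even be empty, since $V\ge T_1$ for every $\bC$. Neither fallback repairs this:
\begin{itemize}
\item Chebyshev needs $\mathrm{Var}(Z)=O(m\epsilon_m^2)$ to produce the stated probability, so a variance contribution cannot be ``absorbed into $D$''.
\item The estimate you made for the offset gives only $T_1\le Qc'^2m^2/(2n)$ from Assumption~5, and Assumption~4 does not make this $O(m\epsilon_m^2)$.
\item Replacing $B_m^*$ by a $\|\bX_{\bPhi}\bDelta\|_F$-ball would additionally lose control of the mean offset.
\end{itemize}

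The missing idea is that the mismatch is already controlled by the KL ball itself. Let $\lambda_1,\dots,\lambda_m$ be the eigenvalues of $\bPhi\bPhi^T$. With your $\bDelta=\bC-\bC^*$, the KL divergence equals $\tfrac{Q}{2}\sum_i(\lambda_i-1-\log\lambda_i)+\tfrac12\|\bX_{\bPhi}\bDelta\|_F^2$. The first term does not depend on $\bC$, and $B_m^*$ is nonempty by Lemma~\ref{lem:prior_conc_main}, so on $B_m^*$ each term is at most $m\epsilon_m^2$.

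By Assumption~5 and $m=o(n)$, all $\lambda_i\in(0,2]$ for large $n$, and there $\lambda-1-\log\lambda\ge(\lambda-1)^2/8$. Hence $T_1\le 8m\epsilon_m^2$ on $B_m^*$. For the linear part, $T_2=\mathrm{tr}(\bDelta^T\bX_{\bPhi}^T\bPhi\bPhi^T\bX_{\bPhi}\bDelta)\le 2\|\bX_{\bPhi}\bDelta\|_F^2\le 4m\epsilon_m^2$. The cross-covariance between the quadratic and linear parts of $r$ vanishes by Gaussian symmetry, so $\mathrm{Var}(r)=T_1+T_2$. You can therefore take $B_m=B_m^*$ itself with $K_0=12$, and your Steps~2--3 go through as written.

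For comparison, the paper bounds $T_1$ by asserting $\|\bPhi\bPhi^T-\bI_m\|_F^2\le c''^2m/n$, which is a Frobenius-norm strengthening of Assumption~5. Your remark that the operator-norm version only yields order $m^2/n$ is correct. The argument above sidesteps it here, leaving the question of whether the offset is $O(m\epsilon_m^2)$ entirely to Lemma~\ref{lem:prior_conc_main}, which you are entitled to assume.
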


\noindent
Theorem~\ref{thm:Dm_lower_main} provides the high-probability lower bound on the normalizing constant of the fractional posterior that, combined with the upper bound on the numerator derived from Fubini's theorem and Markov's inequality, yields Theorem~\ref{thm:renyi_main} below.

\begin{theorem}[Fractional posterior concentration under the R\'enyi divergence]
\label{thm:renyi_main}
Let
\[
U_m
=
\left\{
\bC :
D_\alpha\!\left(
p^{(m)}(\bY_{\bPhi}\mid \bX_{\bPhi}, \bC^*),\,
p^{(m)}(\bY_{\bPhi}\mid \bX_{\bPhi}, \bC)
\right)
\geq
\frac{D + 3t}{1 - \alpha}\, m\epsilon_m^2
\right\}.
\]
Under Assumptions 1–5, for any $D \geq 1$, $t > 0$ and some positive constant $K'$, with $P_{\bC^*}^{(m)}$-probability at least
$1 - K'/\{(D-1+t)^2 m\epsilon_m^2\}$,
\[
\Pi_\alpha\!\left(U_m \mid \tilde{\bY}\right) \;\leq\; e^{-t m\epsilon_m^2}.
\]
\end{theorem}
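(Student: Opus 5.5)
We follow the standard fractional-posterior argument of \cite{chakraborty2020bayesian}, taking Lemma~\ref{thm:Dm_lower_main} as given. Write the fractional posterior mass of $U_m$ as a ratio
\[
\Pi_\alpha(U_m\mid \tilde{\bY}) = \frac{N_m}{D_m},\qquad N_m=\int_{U_m} e^{-\alpha\, r(\bC,\bC^*)}\,\pi(\bC)\,d\bC ,
\]
with the same denominator $D_m$ as in Lemma~\ref{thm:Dm_lower_main}. The plan has three steps: an upper bound on $N_m$ in expectation, a high-probability bound on $N_m$ via Markov's inequality, and a union bound with the lower bound on $D_m$.

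\textbf{Expectation of the numerator.} Let $p^*=p^{(m)}(\cdot\mid \bX_{\bPhi},\bC^*)$ and $p_{\bC}=p^{(m)}(\cdot\mid\bX_{\bPhi},\bC)$. By Fubini's theorem (the integrand is nonnegative),
\[
\mathbb{E}_{\bC^*}[N_m]=\int_{U_m}\int (p_{\bC}/p^*)^{\alpha}\, p^*\,d\mu\,\pi(\bC)\,d\bC=\int_{U_m} A_\alpha(p_{\bC},p^*)\,\pi(\bC)\,d\bC .
\]
The inner integral $\int p_{\bC}^{\alpha}(p^*)^{1-\alpha}\,d\mu$ is the $\alpha$-affinity between $p_{\bC}$ and $p^*$. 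Here the Rényi divergence and affinity must be taken with the orientation that matches the definition of $U_m$. Using the skew symmetry $(1-\alpha)D_\alpha(p,q)=\alpha D_{1-\alpha}(q,p)$, we adopt the convention $A_\alpha(p_{\bC},p^*)=e^{-(1-\alpha)D_\alpha}$, with $D_\alpha$ as in $U_m$. On $U_m$ this gives $A_\alpha\le e^{-(D+3t)m\epsilon_m^2}$. Since $\pi$ is a probability measure,
\[
\mathbb{E}_{\bC^*}[N_m]\le e^{-(D+3t)m\epsilon_m^2}.
\]
No structure of $\bPhi$ or of the prior enters this step; it holds for any measurable set.

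\textbf{High-probability bound on $N_m$.} By Markov's inequality,
\[
P_{\bC^*}^{(m)}\!\left(N_m\ge e^{-(D+2t)m\epsilon_m^2}\right)\le e^{-t\, m\epsilon_m^2}.
\]

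\textbf{Combining with Lemma~\ref{thm:Dm_lower_main}.} On the event $\{D_m\ge e^{-(D+t)m\epsilon_m^2}\}$ intersected with the complement of the Markov event,
\[
\Pi_\alpha(U_m\mid\tilde{\bY})\le \frac{e^{-(D+2t)m\epsilon_m^2}}{e^{-(D+t)m\epsilon_m^2}}=e^{-t\,m\epsilon_m^2}.
\]
The total failure probability is at most
\[
\frac{K'}{(D-1+t)^2m\epsilon_m^2}+e^{-t m\epsilon_m^2}.
\]
Since $m\epsilon_m^2\to\infty$, the exponential term is dominated by a multiple of $1/(m\epsilon_m^2)$, using $e^{-x}\le 1/x$. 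It can therefore be absorbed into $K'/\{(D-1+t)^2m\epsilon_m^2\}$ after enlarging $K'$, which may depend on $t$ and $D$. This gives the stated probability.

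\textbf{Main obstacle.} The algebra is routine once Lemma~\ref{thm:Dm_lower_main} is available; that lemma is where the substance lies, resting on Lemma~\ref{lem:prior_conc_main} and a Chebyshev bound on $r(\bC,\bC^*)$ under the misspecified sketched noise $\bPhi\bPhi^T\ne\bI_m$, controlled by Assumption 5. The delicate point in this proof is the Fubini step. The expectation must be taken under the true sketched law with covariance $\bPhi\bPhi^T$, while $p_{\bC}$ assumes covariance $\bI_m$. The affinity is then between a misspecified model and the truth, and it is exactly the quantity controlled by the Rényi divergence in $U_m$ with the orientation fixed above. I would verify this orientation carefully and confirm that no additional correction term of order $m\sqrt{m/n}$ appears. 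If one did, I would absorb it into $t\,m\epsilon_m^2$ using $m=o(n)$ and Assumption 4.
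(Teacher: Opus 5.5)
Your proposal is correct and follows the paper's proof step for step: you write the posterior mass as the ratio $N_m/D_m$, use Fubini to identify $E_{\bC^*}[N_m]$ with an integral of $\alpha$-affinities over $U_m$, apply Markov's inequality at level $e^{-(D+2t)m\epsilon_m^2}$, and invoke the lower bound on $D_m$ from Lemma~\ref{thm:Dm_lower_main}. Your explicit union bound (absorbing $e^{-tm\epsilon_m^2}$ into $K'$ at the cost of letting $K'$ depend on $D,t$) and your attention to the orientation of $D_\alpha$ are more careful than the paper, and your worry about an extra $m\sqrt{m/n}$ correction is unfounded, since $E_{\bC^*}[(p_{\bC}/p^*)^\alpha]=\int p_{\bC}^\alpha (p^*)^{1-\alpha}$ holds exactly under the true sketched law.
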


\medskip\medskip

\begin{theorem}[Fractional posterior concentration under the Hellinger distance]
\label{thm:hellinger_main}
Under Assumptions 1–5, with $P_{\bC^*}^{(m)}$-probability at least
$1 - K'/\{(D-1+t)^2 m\epsilon_m^2\}$,
\[
\Pi_\alpha\!\left(
\left\{
\bC :
\frac{1}{m}h^2\!\left(
p^{(m)}(\bY_{\bPhi}\mid \bX_{\bPhi}, \bC^*),\,
p^{(m)}(\bY_{\bPhi}\mid \bX_{\bPhi}, \bC)
\right)
\geq
\frac{D + 3t}{2(1-\alpha)}\, \epsilon_m^2
\right\}
\Bigg\lvert\, \tilde{\bY}
\right)
\;\leq\;
e^{-t m\epsilon_m^2},
\]
for any $D \geq 1$, $t \geq 0$, and some positive constant $K'$ for large $n$.
\end{theorem}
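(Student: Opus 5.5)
The plan is to derive Theorem~\ref{thm:hellinger_main} directly from Theorem~\ref{thm:renyi_main}. The only work is to compare the Hellinger distance with the R\'enyi divergence of order $\alpha$.

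\textbf{Step 1: the Hellinger ball is contained in $U_m$.} Write $p^*=p^{(m)}(\bY_{\bPhi}\mid \bX_{\bPhi},\bC^*)$ and $p_{\bC}=p^{(m)}(\bY_{\bPhi}\mid \bX_{\bPhi},\bC)$. I want to show
\[
\Bigl\{\bC:\tfrac{1}{m}h^2(p^*,p_{\bC})\geq \tfrac{D+3t}{2(1-\alpha)}\epsilon_m^2\Bigr\}\subseteq U_m .
\]
This follows once I have $D_\alpha(p^*,p_{\bC})\geq 2h^2(p^*,p_{\bC})$ for every $\alpha\in(0,1)$ under consideration.

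\textbf{Step 2: proving the divergence inequality.} For $\alpha\geq 1/2$ it comes from the stated relations. Monotonicity of $D_\alpha$ in $\alpha$ (the right-hand inequality $D_\alpha\le D_\beta$) gives $D_\alpha\geq D_{1/2}$. Combined with $D_{1/2}\geq 2h^2$, this yields $D_\alpha\geq 2h^2$.

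For $\alpha<1/2$ I use the left-hand comparison with $\beta=1/2$:
\[
D_\alpha\geq \frac{\alpha}{1-\alpha}D_{1/2}\geq \frac{2\alpha}{1-\alpha}h^2 .
\]
This loses a factor $\alpha/(1-\alpha)$. I would either restrict attention to $\alpha\in[1/2,1)$, which is harmless since the target is $\alpha\to1$, or absorb the factor into the constant multiplying $\epsilon_m^2$.

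Alternatively, I would argue directly from $-\log x\geq 1-x$ and Hölder's inequality, in the form $\int (p^*)^\alpha p_{\bC}^{1-\alpha}\le$ (Hellinger affinity)$^{\min(2\alpha,2(1-\alpha))}$. This gives
\[
(1-\alpha)D_\alpha\geq \min(\alpha,1-\alpha)\,h^2 .
\]
The normalization $h^2=\int(\sqrt p-\sqrt q)^2$ used in the paper must be tracked so that the constant $1/2$ comes out exactly as stated. Aligning this constant with the factor $2(1-\alpha)$ in the threshold is the main (bookkeeping) obstacle.

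\textbf{Step 3: conclusion.} On the event of Theorem~\ref{thm:renyi_main}, monotonicity of probability gives that the fractional posterior mass of the Hellinger set is at most $\Pi_\alpha(U_m\mid\tilde{\bY})\le e^{-tm\epsilon_m^2}$. This holds with the same $P^{(m)}_{\bC^*}$-probability $1-K'/\{(D-1+t)^2m\epsilon_m^2\}$.

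The case $t=0$ follows identically, as the bound is then trivial. The phrase ``for large $n$'' enters only through Assumptions~4--5, which ensure $m\epsilon_m^2\to\infty$ and that the sketched-model lemmas (Lemmas~\ref{lem:prior_conc_main} and \ref{thm:Dm_lower_main}) apply.
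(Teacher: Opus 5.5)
Your Steps 1--3 are the paper's argument: the paper proves Theorem~\ref{thm:hellinger_main} only by remarking that it follows from Theorem~\ref{thm:renyi_main} through $D_\alpha\geq D_{1/2}\geq 2h^2$, which puts the Hellinger set inside $U_m$. The step you defer as ``bookkeeping'' is a genuine gap, which the paper shares, and it does not close under the paper's definitions. With $h^2=\int(\sqrt p-\sqrt q)^2\,d\mu$ as displayed in the paper, one only has $D_{1/2}=-2\log(1-h^2/2)\geq h^2$. The inequality $D_{1/2}\geq 2h^2$ is false near $p=q$: for $N(0,1)$ versus $N(\delta,1)$, $D_{1/2}=\delta^2/4$ while $2h^2\approx\delta^2/2$. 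Your H\"older bound $(1-\alpha)D_\alpha\geq\min(\alpha,1-\alpha)\,h^2$ is the correct statement under this normalization. For $\alpha\geq 1/2$ it puts the Hellinger set inside $U_m$ only when the threshold is $\frac{D+3t}{1-\alpha}\epsilon_m^2$, twice the one claimed.

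The factor $\frac{1}{2(1-\alpha)}$ is exactly right for the normalized distance $h^2=\frac12\int(\sqrt p-\sqrt q)^2\,d\mu$, since then $D_{1/2}=-2\log(1-h^2)\geq 2h^2$. You must adopt that convention explicitly, or else double the threshold. Likewise, for $\alpha<1/2$ you cannot absorb the loss $\alpha/(1-\alpha)$ into a constant, because the threshold in the statement is explicit. The reduction therefore proves the statement only for $\alpha\in[1/2,1)$; for general $\alpha$ the threshold must read $\frac{D+3t}{2\min(\alpha,1-\alpha)}\epsilon_m^2$.

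The displayed statement is nevertheless true, for a trivial reason you should notice. Here $h^2$ is the Hellinger distance between the joint Gaussian densities of the whole $m\times Q$ sketch. These are mutually absolutely continuous, so $h^2<2$ (or $<1$ when normalized). On the other hand, for $\alpha\geq 1/2$ and $D\geq 1$ the set in the statement requires $h^2\geq m\epsilon_m^2=QR^*+R^*s^*\log P$. This already exceeds $2$ once $s^*\geq 1$ and $P\geq 3$. The set is then empty, and the bound holds with probability one regardless of the normalization issue or of Theorem~\ref{thm:renyi_main}. So the theorem, stated with the joint Hellinger distance divided by $m$, carries no information. A meaningful Hellinger conclusion would have to control a per-observation (averaged) Hellinger distance. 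That does not follow from comparing the joint R\'enyi divergence with the joint Hellinger distance, which is the only comparison either your reduction or the paper's makes.
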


\noindent
\textbf{Remark.} Theorem~\ref{thm:hellinger_main} follows immediately from Theorem~\ref{thm:renyi_main} via the equivalence between R'enyi divergences and the Hellinger distance established in Section~5.2.1. Together, Theorems~\ref{thm:renyi_main} and~\ref{thm:hellinger_main} provide non-asymptotic bounds on the fractional posterior probability that the fitted density of the sketched data deviates from the true density by more than a threshold governed by
\[
\epsilon_m^2
=
\frac{Q R^* + R^* s^* \log P}{m},
\]
Notably, this threshold shares the same form as the minimax risk for sparse low-rank regression in the full-data model \cite{bunea2012joint}, with the full sample size $n$ replaced by the sketching dimension $m$, reflecting the statistical cost of privacy-preserving data compression.\\
\noindent\textbf{Remark.}
For analytical convenience, Assumption~5 is treated as holding
with probability~1 throughout the proofs. In practice, however,
it holds with probability at least $1 - e^{-c'm}$ over the draw
of $\bPhi$. To make the non-asymptotic bound explicit under this
probabilistic guarantee, define the event
\[
\mathcal{A}
= \left\{
\bPhi : \|\bPhi\bPhi^T - \bI_m\|_F \leq c''\sqrt{m/n}
\right\}
\]
and observe that
\[
\Pi_\alpha(U_m \mid \tilde{\bY})
\;\leq\;
\Pi_\alpha(U_m \mid \tilde{\bY},\, \mathcal{A})
+ P(\mathcal{A}^c)
\;\leq\;
e^{-tm\epsilon_m^2} + e^{-c'm},
\]
so the overall bound retains the same exponential form, with an
additional term that is negligible for large $m$.

Since the non-asymptotic bound in
Theorems~\ref{thm:renyi_main} and~\ref{thm:hellinger_main}
is established under the $\alpha$-fractional posterior, it is
natural to ask how closely the fractional posterior approximates
the standard posterior corresponding to $\alpha = 1$. We address
this in the following result.
\begin{theorem}\label{TV_thm}
Let $\Pi_\alpha(\bC|\tilde{\bY})$ and  $\Pi(\bC|\tilde{\bY})$ be the $\alpha$-fractional posterior and ordinary full posterior (at $\alpha=1$). Then $\lim\limits_{\alpha\rightarrow 1}\mathrm{TV}(\Pi_\alpha(\bC|\tilde{\bY}),\Pi(\bC|\tilde{\bY}))=0$
\end{theorem}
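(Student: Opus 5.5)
The plan is to prove the statement for a fixed realization of the sketched data $(\bY_{\bPhi},\bX_{\bPhi})$. Write $L(\bC)=p^{(m)}(\bY_{\bPhi}\mid \bX_{\bPhi},\bC)$, which is a Gaussian likelihood with unit variances. The two densities are
\[
\pi_\alpha(\bC)=\frac{L(\bC)^\alpha\pi(\bC)}{Z_\alpha}, \qquad Z_\alpha=\int L(\bC)^\alpha\pi(\bC)\,d\bC ,
\]
and $\pi_1$, $Z_1$ defined in the same way. The argument has three steps: pointwise convergence of the integrands, convergence of the normalizing constants by dominated convergence, and then Scheff\'e's lemma, which converts pointwise convergence of densities into $L^1$ convergence, i.e. convergence in total variation.

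First, I note that $L$ is bounded. Indeed,
\[
L(\bC)=(2\pi)^{-mQ/2}\exp\{-\tfrac12\|\bY_{\bPhi}-\bX_{\bPhi}\bC\|_F^2\}\le M:=(2\pi)^{-mQ/2}.
\]
For every $\alpha\in[1/2,1]$ we then have $L^\alpha\le \max(1,M)$, so $L^\alpha\pi\le \max(1,M)\,\pi$. The right-hand side is integrable because $\pi$ is a proper prior. The prior on $\bC=\bB\bA^T$ is the pushforward of proper priors: half-Cauchy on $\lambda_{p,q},\tau_q$, Gaussian conditionals on $\bB$, and $N(0,1)$ on $\bA$. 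Whether or not $\pi(\bC)$ has a Lebesgue density, I will work on the joint parameter space $(\bB,\bA,\lambda,\tau)$. The fractional posterior there pushes forward to the one on $\bC$, and total variation cannot increase under pushforward. So it suffices to prove the result on the joint space, where all densities exist.

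Second, $L(\bC)^\alpha\to L(\bC)$ pointwise as $\alpha\to1$, since $L(\bC)>0$. By dominated convergence, $Z_\alpha\to Z_1$. Moreover $Z_1>0$, because $L>0$ everywhere and $\pi$ is a probability measure. Hence $\pi_\alpha(\bC)\to\pi_1(\bC)$ for every $\bC$. The two posteriors are probability densities with respect to the same dominating measure, so Scheff\'e's lemma gives
\[
\int|\pi_\alpha-\pi_1|\,d\mu\to0,
\]
which is exactly $\mathrm{TV}\bigl(\Pi_\alpha(\cdot\mid\tilde\bY),\Pi(\cdot\mid\tilde\bY)\bigr)\to0$. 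As a more direct alternative to Scheff\'e, one can bound
\[
|\pi_\alpha-\pi_1|\le \frac{|L^\alpha-L|\,\pi}{Z_\alpha}+L\,\pi\left|\frac1{Z_\alpha}-\frac1{Z_1}\right| .
\]
The first term integrates to $o(1)$ by dominated convergence, and the second integrates to $Z_1\,|1/Z_\alpha-1/Z_1|\to0$.

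The only real obstacle is the measure-theoretic bookkeeping. The induced prior on the low-rank product $\bC$ may not have a Lebesgue density, and the Jeffreys prior on the $\sigma_q^2$ is improper. The first issue is handled by working on the joint parameter space, as above. The second does not arise here, because this section fixes $\sigma_q^2=1$. If the variances were kept, I would require propriety of the posterior at $\alpha=1$. I would then dominate $L^\alpha$ by $\max(L^{1/2},L)$, and this bound is integrable against the prior whenever both the $\alpha=1/2$ and the $\alpha=1$ posteriors are proper.
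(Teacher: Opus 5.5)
Your proof is correct and follows the same route as the paper's: the normalizing constants $m_\alpha(\bY_{\bPhi})\to m(\bY_{\bPhi})$ by dominated convergence, the fractional posterior densities converge pointwise, and Scheff\'e's lemma upgrades this to total-variation convergence. The only difference is in how you justify dominated convergence. You use the global bound $L(\bC)\le (2\pi)^{-mQ/2}$ of the unit-variance Gaussian likelihood. The paper instead splits at the level set $\mathcal{A}_M=\{\bC: p^{(m)}\le M\}$ and controls the remainder via $x^\alpha<x$ for $x>1$, which works for any likelihood with finite marginal but is unnecessary here. Your handling of the dominating measure, by passing to the joint parameter space and using that total variation cannot increase under pushforward, also tidies up a point the paper leaves implicit.
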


\section{Conclusion}
\label{sec:conclusion}

We introduce the Bayesian Vertical Federated Learning (BayesVFLReg) framework tailored for high-dimensional multivariate sparse reduced-rank regression. While traditional horizontal approaches dominate the literature, our method tackles the more complex setting of vertically partitioned data distributed across multiple distinct sites. By leveraging data sketching techniques, our framework successfully constructs a centralized, global representation of the posterior distribution without compromising site-level privacy. Crucially, this approach enables principled uncertainty quantification, a feature largely missing from existing VFL methodologies.

Beyond its foundational architecture, we established the theoretical validity of our method by deriving non-asymptotic bounds for the posterior concentration around the true density. Our simulation studies further demonstrate that the proposed framework delivers competitive, and often superior, performance against both alternative VFL methods and standard algorithms built for centralized data. Ultimately, the privacy-compliant sketched data representation introduced here provides a versatile bridge to downstream models. This paves the way for promising future extensions into vertically federated generalized linear models, survival analysis, and complex non-parametric Bayesian learning.


\section*{Acknowledgments}

Dr. Rajarshi Guhaniyogi is supported by National Science Foundation DMS-2210672, and National Institute of Health R01NS131604.

Portions of this research were conducted with the advanced computing resources
provided by Texas A\&M Department of Statistics Arseven Computing Cluster.

\section*{Data and Software Availability}
An open-source implementation of the \emph{BayesVFLReg} algorithm, is publicly available on GitHub at \url{https://github.com/brighalvy/BayesVFLReg}.


\newpage

\appendix

\section{Full Conditionals}

We begin by vectorizing $\boldsymbol{\Phi}\boldsymbol{Y} = \boldsymbol{\Phi}\boldsymbol{X}\boldsymbol{C} + \boldsymbol{E}$ to obtain:\begin{equation}
    \boldsymbol{y} = \boldsymbol{\mu}_{\boldsymbol{y}} + \boldsymbol{e}
\end{equation}
\noindent Letting,
\begin{align*}
    \boldsymbol{y} &= \text{vec}((\boldsymbol{\Phi}\boldsymbol{Y})^T) \in \R^{mQ \times 1} \\
    \boldsymbol{\mu}_{\boldsymbol{y}} &= \text{vec}((\boldsymbol{\Phi}\boldsymbol{X}\boldsymbol{C})^T) = (\boldsymbol{\Phi}\boldsymbol{X} \otimes \boldsymbol{A})\boldsymbol{\beta} = (\Phi\boldsymbol{X}\boldsymbol{B} \otimes I_{Q})\boldsymbol{a} \in \R^{mQ \times 1} \\
    \boldsymbol{e} &= \text{vec}(\boldsymbol{E}^T) \sim N_{mQ}(\boldsymbol{0}, \tilde{\boldsymbol{\Sigma}}) \\
    \boldsymbol{\beta} &= \text{vec}(\boldsymbol{B}^T) \in \R^{PQ \times 1} \\
    \boldsymbol{a} &= \text{vec}(\boldsymbol{A}) \in \R^{Q^2 \times 1} \\
    \tilde{\boldsymbol{\Sigma}} &= \text{diag}(\boldsymbol{\Sigma}, ..., \boldsymbol{\Sigma}) \in \R_+^{mQ \times mQ} \\
    \boldsymbol{\Sigma} &= \text{diag}(\sigma_1^2, ..., \sigma_Q^2) \in \R_+^{Q \times Q}
\end{align*}
\noindent Thus the likelihood can be expressed as
\begin{equation}
    \boldsymbol{y} \mid \boldsymbol{X}, \boldsymbol{\Phi}, \boldsymbol{A}, \boldsymbol{B}, \{\sigma_q^2\}{q = 1}^Q \sim N_{mQ}(\boldsymbol{\boldsymbol{\mu}_{\boldsymbol{y}}, \tilde{\Sigma}})
\end{equation}
Noting the independence among the elements of $\boldsymbol{B}$ in the prior, the joint prior for $\boldsymbol{\beta}$ is
\begin{equation*}
    \boldsymbol{\beta} \mid \{\lambda_{p,q}\}_{p,q = 1}^{P,Q}, \{\tau_q\}_{q = 1}^Q \sim N_{PQ}(\boldsymbol{0}, \boldsymbol{\Lambda})
\end{equation*}

\noindent where $\boldsymbol{\Lambda} = \text{diag}(\lambda_{1,1}^2\tau_1^2, ..., \lambda_{1,Q}^2\tau_Q^2, ..., \lambda_{P,Q}^2\tau_Q^2) \in \R_+^{PQ \times PQ}$. Further, the independence on the prior for elements of $\boldsymbol{A}$ induces the following prior upon $\boldsymbol{a}$:
\begin{equation*}
    \boldsymbol{a} \sim N_{Q^2}(\boldsymbol{0}, \bI_{Q^2})
\end{equation*}



\subsection{Full conditional of $\boldsymbol{B}$($\boldsymbol{\beta}$)}

The full conditional of $\boldsymbol{\beta}$ is derived as follows;
\begin{align*}
    p(\boldsymbol{\beta} \mid \cdot) &\propto N_{mQ}(\boldsymbol{y} \mid \boldsymbol{\mu}_{\boldsymbol{y}}, \tilde{\boldsymbol{\Sigma}}) N_{PQ}(\boldsymbol{\beta} \mid \boldsymbol{0}, \boldsymbol{\Lambda}) \\
    &\propto \exp\left\{-\frac{1}{2}(\boldsymbol{y} - \boldsymbol{\mu}_{\boldsymbol{y}})^T\tilde{\boldsymbol{\Sigma}}^{-1}(\boldsymbol{y} - \boldsymbol{\mu}_{\boldsymbol{y}})\right\}\exp\left\{-\frac{1}{2}\boldsymbol{\beta}^T\boldsymbol{\Lambda}^{-1}\boldsymbol{\beta}\right\} \\
    &= \exp\left\{-\frac{1}{2}\left[(\boldsymbol{y} - (\boldsymbol{\Phi}\boldsymbol{X} \otimes \boldsymbol{A})\boldsymbol{\beta})^T\tilde{\boldsymbol{\Sigma}}^{-1}(\boldsymbol{y} - (\boldsymbol{\Phi}\boldsymbol{X} \otimes \boldsymbol{A})\boldsymbol{\beta}) + \boldsymbol{\beta}^T\boldsymbol{\Lambda}^{-1}\boldsymbol{\beta}\right]\right\} \\
    &= \exp\left\{-\frac{1}{2}\left[\boldsymbol{y}^T\tilde{\boldsymbol{\Sigma}}^{-1}\boldsymbol{y} - 2\boldsymbol{\beta}^T(\boldsymbol{\Phi}\boldsymbol{X} \otimes \boldsymbol{A})^T\tilde{\boldsymbol{\Sigma}}^{-1}\boldsymbol{y} + \boldsymbol{\beta}^T(\boldsymbol{\Phi}\boldsymbol{X} \otimes \boldsymbol{A})^T\tilde{\boldsymbol{\Sigma}}^{-1}(\boldsymbol{\Phi}\boldsymbol{X} \otimes \boldsymbol{A})\boldsymbol{\beta} + \boldsymbol{\beta}^T\boldsymbol{\Lambda}^{-1}\boldsymbol{\beta}\right]\right\} \\
    &\propto \exp\left\{-\frac{1}{2}\left[\boldsymbol{\beta}^T(\tilde{\boldsymbol{X}}^T\tilde{\boldsymbol{X}} + \boldsymbol{\Lambda}^{-1})\boldsymbol{\beta} - 2\boldsymbol{\beta}^T\tilde{\boldsymbol{X}}^T\tilde{\boldsymbol{y}}\right]\right\}
\end{align*}
\noindent Where $\tilde{\boldsymbol{y}} = \tilde{\boldsymbol{\Sigma}}^{-1/2}\boldsymbol{y}$, $\tilde{\boldsymbol{X}} = \tilde{\boldsymbol{\Sigma}}^{-1/2}(\boldsymbol{\Phi}\boldsymbol{X} \otimes \boldsymbol{A})$, $\boldsymbol{\Omega}_B = (\tilde{\boldsymbol{X}}^T\tilde{\boldsymbol{X}} + \boldsymbol{\Lambda}^{-1})$. Thus identifying the multivariate normal kernel we find,
\begin{equation}
    \boldsymbol{\beta} \mid \cdot \sim N_{PQ}(\boldsymbol{\Omega}_B^{-1}\tilde{\boldsymbol{X}}^T\tilde{\boldsymbol{y}}, \boldsymbol{\Omega}_B^{-1})
\end{equation}

\subsection{Full Conditional of $\boldsymbol{A}$($\boldsymbol{a}$)}

The full conditional of $\boldsymbol{a}$ is derived below:
\begin{align*}
    p(\boldsymbol{a} \mid \cdot) &\propto N_{mQ}(\boldsymbol{y} \mid \boldsymbol{\mu}_{\boldsymbol{y}}, \tilde{\boldsymbol{\Sigma}}) N_{Q^2}(\boldsymbol{a} \mid \boldsymbol{0}, \mathbf{I}_{Q^2}) \\
    &\propto \exp\left\{-\frac{1}{2}(\boldsymbol{y} - \boldsymbol{\mu}_{\boldsymbol{y}})^T\tilde{\boldsymbol{\Sigma}}^{-1}(\boldsymbol{y} - \boldsymbol{\mu}_{\boldsymbol{y}})\right\} \exp\left\{-\frac{1}{2}\boldsymbol{a}^T\boldsymbol{a}\right\} \\
    &= \exp\left\{-\frac{1}{2}\left[(\boldsymbol{y} - (\boldsymbol{\Phi}\boldsymbol{X}\boldsymbol{B} \otimes \mathbf{I}_{Q})\boldsymbol{a})^T\tilde{\boldsymbol{\Sigma}}^{-1}(\boldsymbol{y} - (\boldsymbol{\Phi}\boldsymbol{X}\boldsymbol{B} \otimes \mathbf{I}_{Q})\boldsymbol{a}) + \boldsymbol{a}^T\boldsymbol{a}\right]\right\} \\
    &\propto \exp\left\{-\frac{1}{2}\left[-2\boldsymbol{a}^T(\boldsymbol{\Phi}\boldsymbol{X}\boldsymbol{B} \otimes \mathbf{I}_{Q})^T\boldsymbol{\Sigma}^{-1}\boldsymbol{y} + \boldsymbol{a}^T(\boldsymbol{\Phi}\boldsymbol{X}\boldsymbol{B} \otimes \mathbf{I}_{Q})^T\boldsymbol{\Sigma}^{-1}(\boldsymbol{\Phi}\boldsymbol{X}\boldsymbol{B} \otimes \mathbf{I}_{Q})\boldsymbol{a} + \boldsymbol{a}^T\boldsymbol{a}\right]\right\} \\
    &= \exp\left\{-\frac{1}{2}\left[\boldsymbol{a}^T((\boldsymbol{\Phi}\boldsymbol{X}\boldsymbol{B} \otimes \mathbf{I}_{Q})^T\boldsymbol{\Sigma}^{-1}(\boldsymbol{\Phi}\boldsymbol{X}\boldsymbol{B} \otimes \mathbf{I}_{Q}) + \mathbf{I}_{Q^2})\boldsymbol{a} - 2\boldsymbol{a}^T(\boldsymbol{\Phi}\boldsymbol{X}\boldsymbol{B} \otimes \mathbf{I}_{Q})^T\boldsymbol{\Sigma}^{-1}\boldsymbol{y}\right]\right\} \\
    &= \exp\left\{-\frac{1}{2}\left[\boldsymbol{a}^T\boldsymbol{\Omega}_A\boldsymbol{a} - 2\boldsymbol{a}^T(\boldsymbol{\Phi}\boldsymbol{X}\boldsymbol{B} \otimes \mathbf{I}_{Q})^T\boldsymbol{\Sigma}^{-1}\boldsymbol{y}\right]\right\} \\
    &\propto \exp\left\{-\frac{1}{2}(\boldsymbol{a} - \boldsymbol{\Omega}_A^{-1}\boldsymbol{X}_*^T\tilde{\boldsymbol{y}})^T\boldsymbol{\Omega}_A(\boldsymbol{a} - \boldsymbol{\Omega}_A^{-1}\boldsymbol{X}_*^T\tilde{\boldsymbol{y}})\right\}
\end{align*}
\noindent Thus identifying the multivariate normal distribution kernel, the full conditional is,
\begin{equation}
    \boldsymbol{a} \mid \cdot \sim N_{Q^2}(\boldsymbol{\Omega}_A^{-1}\boldsymbol{X}_{*}^T\tilde{\boldsymbol{y}}, \boldsymbol{\Omega}_A^{-1})
\end{equation}
\noindent where $
    \boldsymbol{\Omega}_A = (\boldsymbol{X}_*^T\boldsymbol{X}_* + \bI_{Q^2})$, $
    \boldsymbol{X}_{*} = \tilde{\boldsymbol{\Sigma}}^{-1/2}(\boldsymbol{\Phi}\boldsymbol{X}\boldsymbol{B} \otimes \bI_{Q})$, and $
    \tilde{\boldsymbol{y}} = \tilde{\boldsymbol{\Sigma}}^{-1/2}\boldsymbol{y}$.

\subsection{Full Conditional of $\sigma_q^2$}

We derive the full conditional of $\sigma_q^2$ as follows:
\begin{align*}
    p(\sigma_q^2 \mid \cdot) &\propto N_{mQ}(\boldsymbol{y} \mid \boldsymbol{\mu}_{\boldsymbol{y}}, \tilde{\boldsymbol{\Sigma}}) \frac{1}{\sigma_q^2} \\
    &\propto \det(\tilde{\boldsymbol{\Sigma}})^{-1/2} \exp \left\{-\frac{1}{2}(\boldsymbol{y} - \boldsymbol{\mu}_{\boldsymbol{y}})^T\tilde{\boldsymbol{\Sigma}}^{-1}(\boldsymbol{y} - \boldsymbol{\mu}_{\boldsymbol{y}})\right\}\frac{1}{\sigma_q^2} \\
    &\propto (\sigma_q^2)^{-m/2 - 1} \exp\left\{-\frac{1}{2}\sum_{q' = 1}^Q\sum_{i = 1}^m(\boldsymbol{y} - \boldsymbol{\mu}_{\boldsymbol{y}})_{q'i}^2/\sigma^2_{q'}\right\} \\
    &\propto (\sigma_q^2)^{-m/2 - 1} \exp\left\{-\frac{1}{2\sigma^2_q}{\boldsymbol{y}}_q^T{\boldsymbol{y}}_q\right\}
\end{align*}
\noindent Utilizing the fact that $\tilde{\boldsymbol{\Sigma}}$ is a diagonal matrix and hence $\text{det}(\tilde{\boldsymbol{\Sigma}}) = \prod_{q = 1}^Q \prod_{i = 1}^m \sigma_q^2 = \prod_{q = 1}^Q (\sigma_q^2)^m$, and that $\tilde{\boldsymbol{\Sigma}}^{-1} = (\boldsymbol{\Sigma}^{-1}, ..., \boldsymbol{\Sigma}^{-1}) $ and $\boldsymbol{\Sigma}^{-1} = \text{diag}(1/\sigma_1^2, ..., 1/\sigma_Q^2)$. Letting,
\begin{align*}
    \boldsymbol{y}_q &= \boldsymbol{\Gamma}_q(\boldsymbol{y} - \boldsymbol{\mu}_{\boldsymbol{y}}) \in \R^{mQ \times 1} \\
    \boldsymbol{\Gamma}_q &= \operatorname{diag}(\boldsymbol{1}_q,\dots,\boldsymbol{1}_q) \in \mathbb{R}^{mQ \times mQ} \\
    \boldsymbol{1}_q &\in \R^{Q \times 1} \text{ is a vector of zeros with 1 in the $q^{\text{th}}$ position.}
\end{align*}
\noindent Thus, 

\begin{equation}
    \sigma_q^2 \mid \cdot \sim IG( \frac{m}{2}, \frac{{\boldsymbol{y}}_q^T{\boldsymbol{y}}_q}{2})
\end{equation}

\subsection{Full conditional of $\lambda_{p,q}$}

The full conditional for $\lambda_{p,q}$ is found as:
\begin{align*}
    p(\lambda_{p,q} \mid \cdot) &\propto N_{PQ}(\boldsymbol{\beta} \mid \boldsymbol{0}, \boldsymbol{\Lambda}) C^+(\lambda_{p,q} \mid 0, 1) \\
    &\propto \text{det}(\boldsymbol{\Lambda})^{-1} \exp\{-\frac{1}{2}\boldsymbol{\beta}^T\boldsymbol{\Lambda}^{-1}\boldsymbol{\beta}\} \frac{1}{1 + \lambda_{p,q}^2} \\
    &= \left[ \prod_{q' =1}^Q \prod_{p' = 1}^P \frac{1}{\lambda_{p',q'}^2\tau_{q'}^2} \right] \exp \{-\frac{1}{2} \sum_{q' = 1}^{Q}\sum_{p' = 1}^P b_{p',q'}^2/\lambda_{p',q'^2}\tau_{q'}^2\}\frac{1}{1 + \lambda_{p,q}^2} \\
    &\propto \frac{1}{\lambda_{p,q}^2}\exp\{-\frac{1}{2}\frac{b_{p,q}^2}{\lambda_{p,q}^2\tau_q^2}\}\frac{1}{1 + \lambda_{p,q}^2}
\end{align*}
\noindent This is not a known distribution. A slice sampler will be used as outlined in Section~\ref{sec:MCMC} to draw from this conditional distribution.

\subsection{Full conditional of $\tau_q^2$}

We find,
\begin{align*}
    p(\tau_q \mid \cdot) &\propto N_{PQ}(\boldsymbol{\beta} \mid \boldsymbol{0}, \boldsymbol{\Lambda}) C^+(\tau_{h} \mid 0, 1) \\
    &\propto \text{det}(\boldsymbol{\Lambda})^{-1} \exp\{-\frac{1}{2}\boldsymbol{\beta}^T\boldsymbol{\Lambda}^{-1}\boldsymbol{\beta}\} \frac{1}{1 + \tau_{h}^2} \\
    &= \left[ \prod_{q' =1}^Q \prod_{p' = 1}^P \frac{1}{\lambda_{p'q'}^2\tau_{q'}^2} \right] \exp \{-\frac{1}{2} \sum_{q' = 1}^{q}\sum_{p' = 1}^P b_{p'q'}^2/\lambda_{p'q'^2}\tau_{q'}^2\}\frac{1}{1 + \tau_{h}^2} \\
    &\propto \frac{1}{\tau_q^{2p}}  \exp\{-\frac{1}{2\tau_q^2} \sum_{p = 1}^P \frac{b_{pq}^2}{\lambda_{p,q}^2}\}\frac{1}{1 + \tau_{h}^2}
\end{align*}
\noindent This is an unknown distribution and the sampler is described in Section~\ref{sec:MCMC}.

\section{Gibbs Sampler}
\label{sec:MCMC}

We follow the 4-step Metropolis-within-Gibbs in Algorithm\ref{alg:gibbs_sampling} outlined in the article to cycle through updating the full conditional distributions. Step 2 leverages the spectral decomposition of $\boldsymbol{\Omega}_A^{-1}$ outlined below:
\begin{enumerate}
    \item Compute the spectral decomposition of $\boldsymbol{\Omega}_A^{-1} = \boldsymbol{Q}\boldsymbol{D}\boldsymbol{Q}^\top$
    \item Draw $\boldsymbol{z} \sim N_{Q^2}(0,1)$
    \item Let $\boldsymbol{a}^* = \boldsymbol{\Omega}_A^{-1}\boldsymbol{X}_{*}^T\tilde{\boldsymbol{y}} + \boldsymbol{Q}\boldsymbol{D}^{1/2}\boldsymbol{z}$
\end{enumerate}
Step 4 employs a slice sampling algorithm following  \cite{polson2014bayesian}:
\begin{enumerate}
    \item Sample $u$ uniformly on the interval $(0, \frac{1}{1 + \eta})$. 
    \item Sample $\eta$ from a truncated Exp($2/\mu$) with zero probability outside the interval $(0, \frac{1 - u}{u})$,
\end{enumerate} where $\eta = 1/\lambda_{p,q}^2 \text{ or } 1/\tau_q^2$. If $\eta = 1/\lambda_{p,q}^2$ then $\mu = \frac{b_{pq}^2}{\tau_q^2}$, otherwise for $\eta = 1/\tau_q^2 $  let $\mu = \sum_{p = 1}^P \frac{b_{pq}^2}{\lambda_{p,q}^2}$.


\section{Neuroimaging Data Application}

\small
\begin{longtable}{ccl}
\caption{53 ROIs found in the 60 significant ROI-ROI connections with corresponding shorthand codes (short hand codes following \cite{dimartino2008functional, desikan2006automated, diedrichsen2009probabilistic}).} 
\label{tab:roi_key} \\
\hline
\textbf{Index} & \textbf{Code} & \textbf{Region of Interest (ROI)} \\
\hline
\endhead

\hline
\endfoot

\hline
\endlastfoot

1  & \texttt{L\_IPL}       & Left inferior parietal lobule \\
2  & \texttt{L\_VSs}       & Left superior ventral striatum \\
3  & \texttt{L\_AMYG}      & Left amygdala \\
4  & \texttt{L\_DCaud}     & Left dorsal caudate \\
5  & \texttt{R\_VSs}       & Right superior ventral striatum \\
6  & \texttt{R\_LPC}       & Right lateral parietal cortex \\
7  & \texttt{PCC}          & Posterior cingulate cortex \\
8  & \texttt{R\_Cereb\_VI} & Right Cerebellar Lobule VI \\
9  & \texttt{L\_dlPFC}     & Left dorsolateral prefrontal cortex \\
10 & \texttt{R\_Cereb\_V}  & Right Cerebellar Lobule V \\
11 & \texttt{L\_lPMC}      & Left lateral premotor cortex \\
12 & \texttt{RSC}          & Retro Splenial \\
13 & \texttt{R\_dlPFC}     & Right dorsolateral prefrontal cortex \\
14 & \texttt{vmPFC}        & Ventromedial prefrontal cortex \\
15 & \texttt{L\_PHG}       & Left parahippocampus gyrus \\
16 & \texttt{R\_PHG}       & Right parahippocampus gyrus \\
17 & \texttt{R\_IPS}       & Right intraparietal sulcus \\
18 & \texttt{R\_SFC}       & Right superior frontal cortex \\
19 & \texttt{L\_vlPMC}     & Left ventral lateral premotor cortex \\
20 & \texttt{L\_OFC}       & Left orbitofrontal cortex \\
21 & \texttt{R\_lPFC}      & Right lateral prefrontal cortex \\
22 & \texttt{L\_PCun}      & Left precuneus \\
23 & \texttt{L\_PHC}       & Left parahippocampus \\
24 & \texttt{L\_IPS}       & Left intraparietal sulcus \\
25 & \texttt{R\_CrusI}     & Right Cerebellar Crus I \\
26 & \texttt{R\_VSi}       & Right inferior ventral striatum \\
27 & \texttt{R\_CrusII}    & Right Cerebellar Crus II \\
28 & \texttt{R\_aINS}      & Right anterior insula \\
29 & \texttt{L\_vrPUT}     & Left ventral rostral putamen \\
30 & \texttt{L\_drPUT}     & Left dorsal rostral putamen \\
31 & \texttt{L\_VSi}       & Left inferior ventral striatum \\
32 & \texttt{L\_aINS}      & Left anterior insula \\
33 & \texttt{R\_OFC}       & Right orbitofrontal cortex \\
34 & \texttt{amPFC}        & Anterior medial prefrontal cortex \\
35 & \texttt{L\_SMC}       & Left sensorimotor cortex \\
36 & \texttt{L\_FG}        & Left frontal gyrus \\
37 & \texttt{R\_DCaud}     & Right dorsal caudate \\
38 & \texttt{PCC\_2}       & Posterior cingulate cortex 2 \\
39 & \texttt{L\_aPFC}      & Left anterior prefrontal cortex \\
40 & \texttt{R\_PHC}       & Right parahippocampus \\
41 & \texttt{L\_dlPMC}     & Left dorsolateral premotor cortex \\
42 & \texttt{L\_M1}        & Left primary motor cortex \\
43 & \texttt{R\_vlPMC}     & Right ventral lateral premotor cortex \\
44 & \texttt{R\_SMC}       & Right sensorimotor cortex \\
45 & \texttt{R\_M1}        & Right primary motor cortex \\
46 & \texttt{R\_PCun}      & Right precuneus \\
47 & \texttt{L\_dcPUT}     & Left dorsal caudal putamen \\
48 & \texttt{R\_vrPUT}     & Right ventral rostral putamen \\
49 & \texttt{R\_dcPUT}     & Right dorsal caudal putamen \\
50 & \texttt{R\_drPUT}     & Right dorsal rostral putamen \\
51 & \texttt{dACC}         & Dorsal anterior cingulate cortex \\
52 & \texttt{mPFC}         & Mesial prefrontal cortex \\
53 & \texttt{R\_ITL}       & Right inferior temporal lobe \\
\end{longtable}

\section{Theoretical Guarantees: Proofs}
\underline{\textbf{Proof of Lemma~\ref{lem:prior_conc_main}}}\\
\textbf{Step 1:} Note that $\text{Var}(\text{vec}(\bY_{\bPhi}))$ under the fitted model has is $\bI_{mQ}$ while $\text{Var}(\text{vec}(\bY_{\bPhi}))$ under the true data generating sketched model is $\bI_Q \otimes \bPhi\bPhi^T$, we compute the KL divergence between the two fitted densities. Using the formula for KL divergence between two Gaussian matrix-variate densities:
\begin{align*}
\mathrm{KL}\left(p^{(m)}(\bY_{\bPhi}\mid\bX_{\bPhi},\bC^*),\, p^{(m)}(\bY_{\bPhi}\mid\bX_{\bPhi},\bC)\right)
= \frac{Q}{2}\left[\mathrm{tr}(\bPhi\bPhi^T - \bI_m) - \log\det(\bPhi\bPhi^T)\right] + \frac{1}{2}\|\bX_{\bPhi}(\bC - \bC^*)\|_F^2,
\end{align*}
where we used $\det(\bI_Q \otimes \bPhi\bPhi^T) = (\det(\bPhi\bPhi^T))^Q$ and $\mathrm{tr}(\bI_Q \otimes \bPhi\bPhi^T) = Q \cdot \mathrm{tr}(\bPhi\bPhi^T)$. Since $\mathrm{tr}(\bH) - \log\det(\bH) - d \leq \|\bH - \bI_d\|_F^2$ for any $d\times d$ positive definite matrix $\bH$, we obtain:
\begin{align}
\mathrm{KL}\left(p^{(m)}(\bY_{\bPhi}\mid\bX_{\bPhi},\bC^*),\, p^{(m)}(\bY_{\bPhi}\mid\bX_{\bPhi},\bC)\right)
\leq \frac{Q}{2}\|\bPhi\bPhi^T - \bI_m\|_F^2 + \frac{1}{2}\|\bX_{\bPhi}(\bC - \bC^*)\|_F^2. \tag{S1}
\end{align}

\noindent\textbf{Step 2:} By Assumptions 1 and 5:
\[
\frac{Q}{2}\|\bPhi\bPhi^T - \bI_m\|_F^2 \leq \frac{Q c''^2 m}{2n} \leq \frac{m\epsilon_m^2}{2},
\]
where the last inequality holds since $m/n=o(m\epsilon_m^2)$ under Assumption 4.

\noindent\textbf{Step 3:} For the second term in (S1), using the Cauchy–Schwarz inequality:
\[
\|\bX_{\bPhi}(\bC - \bC^*)\|_F^2 \leq \|\bX_{\bPhi}\|_F^2 \|\bC - \bC^*\|_F^2 \leq Q \max_{1\leq j\leq P}\|\bX_{\bPhi j}\|^2 \|\bC - \bC^*\|_F^2.
\]
By Assumption 5, $\lambda_{\max}(\bPhi\bPhi^T) \leq 1 + c''\sqrt{m/n}$, so:
\[
\|\bX_{\bPhi j}\|^2 = \|\bPhi \bX_j\|^2 \leq \lambda_{\max}(\bPhi\bPhi^T) \|\bX_j\|^2 \leq \left(1 + c''\sqrt{m/n}\right)\|\bX_j\|^2 = O(n),
\]
where the last step uses Assumption 3. Therefore:
\begin{equation}
\|\bX_{\bPhi}(\bC - \bC^*)\|_F^2 \leq Q \cdot O(n) \cdot \|\bC - \bC^*\|_F^2. \tag{S2}
\end{equation}

It follows that:
\[
\pi\left(\bC : \|\bX_{\bPhi}(\bC - \bC^*)\|_F^2 \leq m\epsilon_m^2\right) \geq 
\pi\left(\bC : \|\bC - \bC^*\|_F^2 \leq \frac{m\epsilon_m^2}{Q \cdot O(n)}\right) 
= \pi\left(\bC : \|\bC - \bC^*\|_F^2 \leq \delta_C^2\right),
\]
where $\delta_C = \{(QR^* + R^* s^* \log P)/n\}^{1/2}$, since $m\epsilon_m^2/(Qn) = (QR^* + R^* s^* \log P)/n = \delta_C^2$ by definition of $\epsilon_m^2$. Assumption 1 has also been used here.

\noindent\textbf{Step 4:} Under Assumption 2, By Lemma S5 of the supplementary material of \cite{chakraborty2020bayesian}, the prior $\pi(\bC)$ satisfies:
\[
\pi\left(\bC : \|\bC - \bC^*\|_F < \delta_\bC\right) \geq e^{-K(QR^* + R^*s^*\log P)} = e^{-K m\epsilon_m^2},
\]
for some positive constant $K$.

From Steps 2–4, the KL neighborhood $B_m^*(\bC^*, \epsilon_m)$ contains the set $\{\bC : \|\bX_{\bPhi}(\bC - \bC^*)\|_F^2 \leq m\epsilon_m^2\}$, which in turn contains a Frobenius ball of radius $\delta_\bC$ around $\bC^*$. Therefore:
\[
\pi\left(B_m^*(\bC^*, \epsilon_m)\right) \geq \pi\left(\bC : \|\bC - \bC^*\|_F^2 \leq \delta_\bC^2\right) \geq e^{-K m\epsilon_m^2}. \qquad \blacksquare
\]

\noindent\underline{\textbf{Proof of Lemma~\ref{thm:Dm_lower_main}}}\\
The proof follows three steps.

\noindent\textbf{Step 1:} Since $D_m \geq 0$, restricting the integral to $B_m^*(\bC^*, \epsilon_m)$ gives:
\[
D_m
\;\geq\;
\pi(B_m^*(\bC^*, \epsilon_m))
\int_{B_m^*(\bC^*, \epsilon_m)} e^{-\alpha r(\bC,\bC^*)}\,d\pi_\bC^{(B)}
\;=:\;
\pi(B_m^*(\bC^*, \epsilon_m)) \cdot D_m^*,
\]
where $d\pi_\bC^{(B)} = \pi(B_m^*(\bC^*, \epsilon_m))^{-1}d\pi_\bC$ is the prior restricted to
$B_m^*(\bC^*, \epsilon_m)$. By Lemma~5.1:
\begin{equation}
\pi(B_m^*(\bC^*, \epsilon_m)) \;\geq\; e^{-Km\epsilon_m^2}. \tag{S3}
\end{equation}

\noindent\textbf{Step 2:} Applying Jensen's inequality to the concave function $\log(\cdot)$:
\[
\log D_m^*
\;\geq\;
-\alpha \int_{B_m^*} r(\bC, \bC^*)\,d\pi_C^{(B)}
\;=\; Z.
\]
Taking expectation under $P_{\bC^*}^{(m)}$:
\begin{align}
E_{\bC^*}[Z]
&= -\alpha \int_{B_m^*(\bC^*, \epsilon_m)} E_{\bC^*}[r(\bC,\bC^*)]\,d\pi_\bC^{(B)}
\nonumber\\
&= -\alpha \int_{B_m^*(\bC^*, \epsilon_m)}
\mathrm{KL}\!\left(p^{(m)}(\bY_{\bPhi}|\bX_{\bPhi},\bC^*), p^{(m)}(\bY_{\bPhi}|\bX_{\bPhi},\bC)\right)d\pi_\bC^{(B)}
\;\geq\; -\alpha m\epsilon_m^2,
\tag{S4}
\end{align}
where the last inequality follows because, by definition of
$B_m^*(\bC^*, \epsilon_m)$, each integrand is at most $m\epsilon_m^2$.

\noindent\textbf{Step 3:}
We now compute $\mathrm{Var}_{\bC^*}(Z)$.
Since $Z = -\alpha\int_{B_m^*(\bC^*, \epsilon_m)} r(\bC,\bC^*)\,d\pi_\bC^{(B)}$:
\[
\mathrm{Var}_{\bC^*}(Z)
\;\leq\;
\alpha^2
\int_{B_m^*(\bC^*, \epsilon_m)}
\mathrm{Var}_{\bC^*}\!\left(r(\bC,\bC^*)\right)d\pi_\bC^{(B)},
\]
where the variance of the log-likelihood ratio is:
\begin{align*}
\mathrm{Var}_{\bC^*}(r(\bC,\bC^*))
&=
\int p^{(m)}(\bY_{\bPhi}|\bX_{\bPhi},\bC^*)
\left(
\log\frac{p^{(m)}(\bY_{\bPhi}|\bX_{\bPhi},\bC^*)}
         {p^{(m)}(\bY_{\bPhi}|\bX_{\bPhi},\bC)}
\right)^2
d\bY_{\bPhi}\\
&-
\left[\mathrm{KL}\left(p^{(m)}(\bY_{\bPhi}|\bX_{\bPhi},\bC^*), p^{(m)}(\bY_{\bPhi}|\bX_{\bPhi},\bC)\right)\right]^2.
\end{align*}
From the second-moment calculation,
with $\bDelta = \bX_{\bPhi}(\bC - \bC^*)$:
\[
\mathrm{Var}_{\bC^*}(r(\bC,\bC^*)) =\underbrace{
\frac{Q}{2}\|\bPhi\bPhi^T - \bI_m\|_F^2
}_{T_1}
+
\underbrace{
\mathrm{tr}[\bDelta^T \bPhi\bPhi^T \bDelta]
}_{T_2}.
\]
We bound each term on $B_m^*(\bC^*, \epsilon_m)$.

\begin{itemize}

\item \textbf{Bound on $T_1$:}
By Assumptions 1, 4, 5, $\|\bPhi\bPhi^T - \bI_m\|_F^2 \leq c''^2 m/n$, so:
\[
T_1
= \frac{Q}{2}\|\bPhi\bPhi^T - \bI_m\|_F^2
\leq \frac{Qc''^2 m}{2n}
\leq K_1 m\epsilon_m^2.
\]

\item \textbf{Bound on $T_2$:}
By Assumption~5, $\lambda_{\max}(\bPhi\bPhi^T) \leq 1 + c''\sqrt{m/n}$, so:
\[
T_2
= \mathrm{tr}[\bDelta^T \bPhi\bPhi^T \bDelta]
\leq \lambda_{\max}(\bPhi\bPhi^T)\|\bDelta\|_F^2
\leq
\left(1 + c''\sqrt{m/n}\right)
\|\bX_{\bPhi}(\bC-\bC^*)\|_F^2
\leq K_2 m\epsilon_m^2,
\]
where the last inequality uses $\|\bX_{\bPhi}(\bC-\bC^*)\|_F^2 \leq m\epsilon_m^2$
on $B_m^*(\bC^*,\epsilon_m)$, which follows from the KL bound established in Step~1 of Lemma~5.1.

\end{itemize}

\noindent
Therefore, with $K' = K_1 + K_2$:
\begin{equation}
\mathrm{Var}_{\bC^*}(Z)
\;\leq\;
\alpha^2
\int_{B_m^*(\bC^*,\epsilon_m)}(T_1 + T_2)\,d\pi_\bC^{(B)}
\;\leq\;
K'\alpha^2 m\epsilon_m^2.
\tag{S5}
\end{equation}

\noindent
Applying \textbf{Chebyshev's inequality} and equations (S4) and (S5):
\[
P_{\bC^*}^{(m)}\!\left(Z \leq -\alpha(D+t)m\epsilon_m^2\right)
\;\leq\;
\frac{\mathrm{Var}_{\bC^*}(Z)}
     {\{\alpha(D-1+t)m\epsilon_m^2\}^2}
\;\leq\;
\frac{K'}{(D-1+t)^2 m\epsilon_m^2}.
\]
Hence, with $P_{\bC^*}^{(m)}$-probability at least
$1 - K'/\{(D-1+t)^2 m\epsilon_m^2\}$:
\[
\log D_m^* \;\geq\; Z \;\geq\; -\alpha(D+t)m\epsilon_m^2
\;\implies\;
D_m^* \;\geq\; e^{-\alpha(D+t)m\epsilon_m^2}.
\]

Using equation (S3) and the bound on $D_m^*$, and noting that
$e^{-Km\epsilon_m^2}$ gets absorbed into $e^{-(D+t)m\epsilon_m^2}$
for $D > 1$:
\[
D_m
\;\geq\;
\pi(B_m^*(\bC^*,\epsilon_m))\cdot D_m^*
\;\geq\;
e^{-(D+t)m\epsilon_m^2}.
\qquad \blacksquare
\]

\noindent\underline{\textbf{Proof of Theorem~\ref{thm:renyi_main}}}\\
Write the fractional posterior probability as a ratio:
\[
\Pi_\alpha(U_m \mid \bY_{\bPhi})
\;=\;
\frac{N_m}{D_m},
\quad
N_m
=
\int_{U_m} e^{-\alpha r(\bC,\bC^*)}\pi(\bC)\,d\bC,\:D_m
=
\int e^{-\alpha r(\bC,\bC^*)}\pi(\bC)\,d\bC
\]

\noindent
Using the identity
\[
A_\alpha\!\left(
p^{(m)}(\bY_{\bPhi}|\bX_{\bPhi},\bC^*),\,
p^{(m)}(\bY_{\bPhi}|\bX_{\bPhi},\bC)
\right)
=
e^{-(1-\alpha)D_\alpha(p^{(m)}(\bY_{\bPhi}|\bX_{\bPhi},\bC^*),\,
p^{(m)}(\bY_{\bPhi}|\bX_{\bPhi},\bC))}
\]
and Fubini's theorem:
\[
E_{\bC^*}[N_m]
=
\int_{U_m}
E_{\bC^*}\!\left[e^{-\alpha r(\bC,\bC^*)}\right]
\pi(\bC)\,d\bC
=
\int_{U_m}
e^{-(1-\alpha)D_\alpha(p^{(m)}(\bY_{\bPhi}|\bX_{\bPhi},\bC^*),\,
p^{(m)}(\bY_{\bPhi}|\bX_{\bPhi},\bC))}
\pi(\bC)\,d\bC.
\]
On $U_m$, by definition,
$(1-\alpha)D_\alpha(p^{(m)}(\bY_{\bPhi}|\bX_{\bPhi},\bC^*),\,
p^{(m)}(\bY_{\bPhi}|\bX_{\bPhi},\bC)) \geq (D+3t)m\epsilon_m^2$,
so:
\[
E_{\bC^*}[N_m]
\;\leq\;
e^{-(D+3t)m\epsilon_m^2}
\cdot
\int_{U_m}\pi(\bC)\,d\bC
\;\leq\;
e^{-(D+3t)m\epsilon_m^2}.
\]
Applying \textbf{Markov's inequality}:
\begin{equation}
P_{\bC^*}^{(m)}\!\left(
N_m \geq e^{-(D+2t)m\epsilon_m^2}
\right)
\;\leq\;
\frac{E_{\bC^*}[N_m]}{e^{-(D+2t)m\epsilon_m^2}}
\;\leq\;
e^{-tm\epsilon_m^2}
\;\leq\;
\frac{1}{(D-1+t)^2\,m\epsilon_m^2}.
\tag{S6}
\end{equation}
Hence, with $P_{\bC^*}^{(m)}$-probability at least
$1 - 1/\{(D-1+t)^2 m\epsilon_m^2\}$:
\begin{equation}
N_m \;\leq\; e^{-(D+2t)m\epsilon_m^2}.
\tag{S7}
\end{equation}
From Theorem~\ref{thm:Dm_lower_main}, $D_m\geq e^{-(D+t)m\epsilon_m^2}$ with $P_{\bC^*}^{(m)}$-probability at least
$1 - K'/\{(D-1+t)^2 m\epsilon_m^2\}$, for some positive constant $K'$. Combining the aforementioned two results, we obtain 
\begin{align*}
\Pi_\alpha(U_m \mid \bY_{\bPhi})
\;=\;
\frac{N_m}{D_m},
\quad
N_m= \frac{e^{-(D+2t)m\epsilon_m^2}}{e^{-(D+t)m\epsilon_m^2}}\leq e^{-tm\epsilon_m^2}.
\end{align*}
with $P_{\bC^*}^{(m)}$-probability at least
$1 - K'/\{(D-1+t)^2 m\epsilon_m^2\} \qquad \blacksquare$

\noindent\underline{\textbf{Proof of Theorem~\ref{TV_thm}}}\\
Define, 
\begin{align*}
m_{\alpha}(\bY_{\bPhi})=\int \{p^{(m)}(\bY_{\bPhi}|\bX_{\bPhi},\bC)\}^\alpha\pi(\bC)d\bC,\:\: m(\bY_{\bPhi})=\int p^{(m)}(\bY_{\bPhi}|\bX_{\bPhi},\bC)\pi(\bC)d\bC.   
\end{align*}
Define, for any $M>0$,
$\mathcal{A}_M=\{\bC:p^{(m)}(\bY_{\bPhi}|\bX_{\bPhi},\bC)\leq M\}$. On 
$\mathcal{A}_M$, $\{p^{(m)}(\bY_{\bPhi}|\bX_{\bPhi},\bC)\}^\alpha\leq \max\{1,M^\alpha\}\leq \max\{1,M\}$. As $\alpha\rightarrow 1$, $\{p^{(m)}(\bY_{\bPhi}|\bX_{\bPhi},\bC)\}^\alpha\pi(\bC)\rightarrow p^{(m)}(\bY_{\bPhi}|\bX_{\bPhi},\bC)\pi(\bC)$, and $\int_{\mathcal{A}_M}p^{(m)}(\bY_{\bPhi}|\bX_{\bPhi},\bC)\pi(\bC)d\bC\leq \infty$, by dominated convergence theorem,
\begin{align*}
\int_{\mathcal{A}_M}\{p^{(m)}(\bY_{\bPhi}|\bX_{\bPhi},\bC)\}^\alpha\pi(\bC)d\bC\rightarrow \int_{\mathcal{A}_M}p^{(m)}(\bY_{\bPhi}|\bX_{\bPhi},\bC)\pi(\bC)d\bC,\:\:\mbox{as}\:\alpha\rightarrow 1.   
\end{align*}
 $\pi(\mathcal{A}_M^c)=\pi(\bC:p^{(m)}(\bY_{\bPhi}|\bX_{\bPhi},\bC)\leq M)\leq \frac{m(\bY_{\bPhi})}{M}$, by Markov inequality.

 Using $x^\alpha<x$ for $x>1$, we have
 \begin{align*}
 \int_{\mathcal{A}_M^c} \{p^{(m)}(\bY_{\bPhi}|\bX_{\bPhi},\bC)\}^\alpha\pi(\bC)d\bC\leq  \int_{\mathcal{A}_M^c} p^{(m)}(\bY_{\bPhi}|\bX_{\bPhi},\bC)\pi(\bC)=\int_{\bC:p^{(m)}>M} p^{(m)}(\bY_{\bPhi}|\bX_{\bPhi},\bC)\pi(\bC) \rightarrow 0,
 \end{align*}
 as $M\rightarrow\infty$.

 For any $\epsilon > 0$, choose $M$ large enough so that
\[
\int_{\mathcal{A}_M^c} p^{(m)}(\bY_{\bPhi}|\bX_{\bPhi},\bC) \pi(\bC)d\bC < \epsilon/3.
\]
Then for $\alpha$ sufficiently close to $1$,
\begin{align*}
|m_\alpha(\bY_{\bPhi}) - m(\bY_{\bPhi})| &\leq \left|\int_{\mathcal{A}_M} \{p^{(m)}(\bY_{\bPhi}|\bX_{\bPhi},\bC)\}^\alpha \pi(\bC) d\bC - \int_{\mathcal{A}_M} p^{(m)}(\bY_{\bPhi}|\bX_{\bPhi},\bC) \pi(\bC)d\bC\right| \\
&+ 2\int_{\mathcal{A}_M^c} p^{(m)}(\bY_{\bPhi}|\bX_{\bPhi},\bC)\pi(\bC) d\bC < \epsilon.
\end{align*}
Therefore $m_\alpha(\bY_{\bPhi}) \to m(\bY_{\bPhi})$ as $\alpha \to 1$, for $P_{\bC^*}^{(m)}$-almost every $\bY_{\bPhi}$.

For each $\bC$ and $P_{\bC^*}^{(m)}$-almost every $\bY_{\bPhi}$,
\[
\Pi_{\alpha}(\bC \mid \bY_{\bPhi}) = \frac{\{p^{(m)}(\bY_{\bPhi}|\bX_{\bPhi},\bC)\}^\alpha \pi(\bC)}{m_\alpha(\bY_{\bPhi})} \to \frac{p^{(m)}(\bY_{\bPhi}|\bX_{\bPhi},\bC) \pi(\bC)}{m(\bY_{\bPhi})} = \Pi(\bC \mid \bY_{\bPhi}),
\]
pointwise as $\alpha \to 1$, since both the numerator converges pointwise and $m_\alpha(\bY_{\bPhi})\to m(\bY_{\bPhi}) > 0$.

Since $\Pi_{\alpha}(\cdot \mid \bY)$ and $\Pi(\cdot \mid \bY_{\bPhi})$ are both probability densities integrating to $1$, and the pointwise limit holds, an application of \textbf{Scheffé's theorem} gives:
\[
\mathrm{TV}(\Pi_{\alpha}(\cdot \mid \bY_{\bPhi}),\Pi(\cdot \mid \bY_{\bPhi})) = \int |\Pi_{\alpha} - \Pi|\, d\Pi \to 0, \quad \text{as } \alpha \to 1. \qquad \blacksquare
\]

\vskip 0.2in
\bibliography{references}

\end{document}